\newcommand{\method}{SoTPDEG}
\theoremstyle{plain}
\newtheorem{theorem}{Theorem}[section]
\newtheorem{proposition}[theorem]{Proposition}
\newtheorem{lemma}{Lemma}
\newtheorem{corollary}[theorem]{Corollary}
\theoremstyle{definition}
\newtheorem{definition}[theorem]{Definition}
\theoremstyle{remark}
\newtheorem{remark}[theorem]{Remark}
\newcommand{\ie}{\textit{i}.\textit{e}., }
\newcommand{\eg}{\textit{e}.\textit{g}., }
\DeclareMathOperator{\tr}{\operatorname{tr}}
\newcommand{\vertiii}[1]{{\left\vert\kern-0.25ex\left\vert\kern-0.25ex\left\vert #1 
    \right\vert\kern-0.25ex\right\vert\kern-0.25ex\right\vert}}
\title{Second-Order Tensorial Partial Differential Equations on Graphs}
\name{Aref Einizade$^1$, Fragkiskos D. Malliaros$^2$, Jhony H. Giraldo$^1$\thanks{This research was supported by the ANR projects DeSNAP (ANR-24-CE23-1895-01) and GraphIA (ANR-20-CE23-0009-01).}}
\address{$^1$LTCI, Télécom Paris, Institut Polytechnique de Paris, France\\
$^2$Université Paris-Saclay, CentraleSupélec, Inria, France}
\begin{document}
%
\maketitle
\begin{abstract}

Processing data on multiple interacting graphs is crucial for many applications, but existing approaches rely mostly on discrete filtering or first-order continuous models, dampening high frequencies and slow information propagation. In this paper, we introduce second-order tensorial partial differential equations on graphs (SoTPDEG) and propose the first theoretically grounded framework for second-order continuous product graph neural networks (GNNs). Our method exploits the separability of cosine kernels in Cartesian product graphs to enable efficient spectral decomposition while preserving high-frequency components. We further provide rigorous over-smoothing and stability analysis under graph perturbations, establishing a solid theoretical foundation. Experimental results on spatiotemporal traffic forecasting illustrate the superiority over the compared methods.

\end{abstract}
\begin{keywords}
Tensorial partial differential equations, graph neural networks, over-smoothing, traffic forecasting, stability
\end{keywords}
\section{Introduction}
\label{sec:intro}

Tensors \cite{kolda2009tensor}, which generalize matrices to higher dimensions, arise in diverse domains such as brain analysis \cite{song2023brain,einizade2023iterative}, recommender systems \cite{ortiz2019sparse}, and spatiotemporal forecasting \cite{einizade2024continuous,cini2023graph}.
When tensors are linked to multiple interdependent graphs, they give rise to \textit{multidomain graph data} \cite{kadambari2021product,monti2017geometric}. Unlike conventional methods in graph signal processing (GSP) \cite{ortega2018graph} and graph machine learning (GML), typically considering a single graph \cite{einizade2023learning}, this setting requires jointly processing across several interacting graph domains. 
Although such data structures are increasingly common in real-world applications, the methods for learning from multidomain graph data remain scarce \cite{sabbaqi2023graph,einizade2024continuous,monti2017geometric,castro2024gegenbauer}.
Thus, developing graph learning techniques for tensorial data holds great promise for advancing the theory and applications in GSP and GML.

A common strategy for learning from multidomain graph data is to use product graphs (PGs) \cite{kadambari2021product,sabbaqi2023graph,einizade2023productgraphsleepnet,einizade2023learning}, a concept rooted in GSP \cite{ortega2018graph}, where discrete filtering operations combine information across domains.
For example, the graph-time convolution neural network (GTCNN) \cite{sabbaqi2023graph} introduces a discrete filtering mechanism to jointly process spatial and temporal information, but suffers from three key limitations: \textit{i}) it requires costly grid searches to tune filter orders, \textit{ii}) polynomial filters restrict long-range receptive fields \cite{behmanesh2023tide,han2024continuous}, and \textit{iii}) it is inherently constrained to two factor graphs (space and time).
Besides, these methods inherit well-known limitations of conventional graph neural networks (GNNs), \eg over-smoothing and over-squashing \cite{cai2020note,giraldo2023trade}, restricting receptive fields and long-range modeling \cite{behmanesh2023tide}. 

In another line of research, continuous GNNs (CGNNs) have emerged as an effective approach to mitigate the challenges of over-smoothing and over-squashing \cite{han2024continuous}.
CGNNs formulate graph learning as solving partial differential equations (PDEs) on graphs—such as the heat or wave equations—through neural architectures that approximate these solutions \cite{han2024continuous,xhonneux2020continuous}. 
While these frameworks have shown promise, they are mostly designed for single-graph settings.
A notable exception is CITRUS \cite{einizade2024continuous}, which extends CGNNs to multidomain data by introducing TPDEGs and using PGs.
Despite its contributions, CITRUS is limited to first-order derivatives \cite{einizade2024continuous}, dampening high-frequency signals and slowing information propagation—issues that are particularly problematic in oscillatory (periodic or fine-grained) tasks such as spatiotemporal forecasting, heterophilic networks, and continuum mechanics simulations \cite{yue2025graph,yue2025hyperbolicpde,rusch2022graph}. 

To overcome these limitations, we propose the first principled second-order TPDEGs (\method), which provide a stronger foundation for modeling multidomain data across multiple interacting graphs. Building on this formulation, we introduce a continuous graph learning model that efficiently solves \method~using separable cosine oscillatory kernels on Cartesian PGs, preserving high-frequency components. This design also allows receptive fields to be learned adaptively during training, eliminating the costly grid searches required by discrete methods. 
Our design can alleviate the computational burden of full spectral decompositions by using a small subset of eigenvalue decompositions from the factor graphs. We further provide rigorous theoretical analyses, showing that our model preserves stability under graph perturbations and controls the over-smoothing rate. 
Importantly, the number of learnable parameters is independent of the number of factor graphs, ensuring scalability to large and complex multidomain settings, with real-world experimental validations on spatiotemporal traffic forecasting.

The key contributions of this work are threefold.
First, we propose \method~as a unified framework for modeling multidomain graph data. Then, we derive a continuous graph filtering mechanism over Cartesian product spaces as a natural solution to \method. Second, we provide comprehensive theoretical analyses of our model, establishing its stability and demonstrating its ability to alleviate over-smoothing. Lastly, we evaluate the \method~on a real-world spatiotemporal traffic forecasting task showcasing the superiority over the compared methods.

\section{Methodology}
\textbf{Preliminaries and notation.} An undirected, weighted graph $\mathcal{G}$ with $N$ vertices is represented as 
$\mathcal{G} = \{\mathcal{V}, \mathcal{E}, \mathbf{A}\}$, 
where $\mathcal{V}$ and $\mathcal{E}$ denote the sets of vertices and edges, respectively. 
The adjacency matrix $\mathbf{A} \in \mathbb{R}^{N \times N}$ characterizes the graph structure, 
with $\mathbf{A}_{ij} = \mathbf{A}_{ji} \ge 0$ indicating the connection strength between 
vertices $i$ and $j$. We assume $\mathbf{A}_{ii} = 0$ for all $i$, meaning the graph contains no self-loops. The graph Laplacian $\mathbf{L} \in \mathbb{R}^{N \times N}$ is defined as $\mathbf{L} = \mathbf{D} - \mathbf{A},$ where $\mathbf{D} = \mathrm{diag}(\mathbf{A}\mathbf{1})$ is the diagonal degree matrix and  $\mathbf{1}$ is the all-ones vector. A \emph{graph signal} is defined as a mapping $x: \mathcal{V} \to \mathbb{R}$, which assigns 
a real value to each node. Such a signal can be represented in vector form as 
$\mathbf{x} = [x_1, \ldots, x_N]^\top$. For a general vector  $\mathbf{a} = [a_1, \ldots, a_N]^\top \in \mathbb{R}^{N \times 1}$, we define the element-wise 
cosine as $\cos(\mathbf{a}) = [\cos(a_1), \ldots, \cos(a_N)]^\top.$ We denote the vectorization operator by $\mathrm{vec}(\cdot)$. Finally, using the Kronecker product $\otimes$, the Kronecker sum \cite{sandryhaila2014big} (also referred to as the Cartesian product) $\oplus$, and the Laplacian factors $\{\mathbf{L}_p \in \mathbb{R}^{N_p \times N_p}\}_{p=1}^P$, we introduce the following definitions:
\begin{equation}
\begin{split}
&\oplus_{p=1}^{P}{\mathbf{L}_p}\vcentcolon=\mathbf{L}_1\oplus\hdots\oplus\mathbf{L}_P,\,\downarrow\oplus_{p=1}^{P}{\mathbf{L}_p}\vcentcolon=\mathbf{L}_P\oplus\hdots\oplus\mathbf{L}_1,\\
&\otimes_{p=1}^{P}{\mathbf{L}_p}\vcentcolon=\mathbf{L}_1\otimes\hdots\otimes\mathbf{L}_P,\,\downarrow\otimes_{p=1}^{P}{\mathbf{L}_p}\vcentcolon=\mathbf{L}_P\otimes\hdots\otimes\mathbf{L}_1.
\end{split}
\end{equation}
Here, $\mathbf{I}_n$ denotes the identity matrix of size $n$. 
We define a $D$-dimensional tensor as 
$\underline{\mathbf{U}} \in \mathbb{R}^{N_1 \times \cdots \times N_D}$. 
The matrix $\underline{\mathbf{U}}_{(i)} \in \mathbb{R}^{N_i \times 
\left[\prod_{r=1, r \neq i}^{D} N_r\right]}$ represents the \emph{$i$-th mode matricization} 
(unfolding) of the tensor $\underline{\mathbf{U}}$ \cite{kolda2009tensor}.
The \emph{mode-$i$ tensor multiplication} of a tensor 
$\underline{\mathbf{U}}$ by a matrix $\mathbf{X} \in \mathbb{R}^{m \times N_i}$ 
is defined as $\underline{\mathbf{G}} = \underline{\mathbf{U}} \times_i \mathbf{X},$ where 
$\underline{\mathbf{G}} \in \mathbb{R}^{N_1 \times \cdots \times N_{i-1} \times 
m \times N_{i+1} \times \cdots \times N_D}$ \cite{kolda2009tensor}.
All the proofs of theorems and propositions of this paper are provided in the Appendix.


\subsection{Second-Order TPDEG}
We first propose the second-order TPDEG (\method) as:
\begin{definition}[\method]
\label{MPDEG_def}
Let $\underline{\mathbf{U}}_t \in \mathbb{R}^{N_1 \times N_2 \times \cdots \times N_P}$ 
denote a multidomain tensor whose elements vary as functions of time $t$. 
We further define the \method~with $\{\mathbf{L}_p \in \mathbb{R}^{N_p \times N_p}\}_{p=1}^P$ 
as the set of factor Laplacians, as:
\begin{equation}
\label{MDPDE}
\frac{\partial^2\underline{\mathbf{U}}_t}{\partial\:t^2} = - \sum_{i=1}^{P} \underline{\mathbf{U}}_t \times_i \mathbf{L}_i^2 - 2 \sum_{1 \le i < j \le P} \underline{\mathbf{U}}_t \times_i \mathbf{L}_i \times_j \mathbf{L}_j.
\end{equation}
\end{definition}
Here, there are two important key differences with the first-order TPDEGs \cite{einizade2024continuous}: (\textit{i}) The first term in the RHS applies $\mathbf{L}^2_i$, compared to TPDEG, which exploits $\mathbf{L}_i$.
Therefore, \method~potentially searches for further neighborhoods. 
(\textit{ii}) The second term in the RHS of \eqref{MDPDE} models the cross-interaction between factor graphs, compared to the TPDEG relying only on the domain independencies. 


\begin{figure}
    \centering
    \includegraphics[width=\columnwidth]{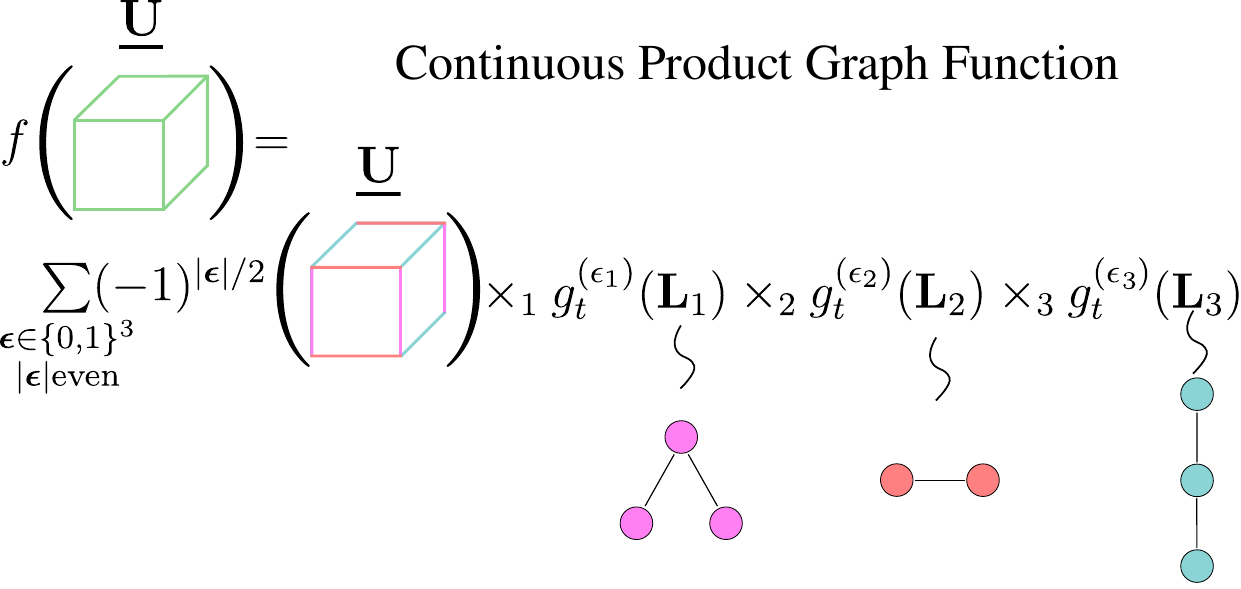}
    \caption{Continuous product graph function in \method~on the tensor $\underline{\mathbf{U}}$.}
    \label{fig:pipeline}
\end{figure}

\begin{theorem}
\label{MDPDE_thm}
Assuming \(\tilde{\underline{\mathbf{U}}}_0\) as the initial value of \(\tilde{\underline{\mathbf{U}}}_t\), the solution to the \method~in \eqref{MDPDE} (with the assumption of $\left. \frac{\partial\tilde{\underline{\mathbf{U}}}_t}{\partial t}\right|_{t=0}=0$) is given by:
{\small
\begin{equation}
\label{MDPDE_sol}
\begin{split}
&\tilde{\underline{\mathbf{U}}}_t =\sum_{\substack{\boldsymbol{\epsilon} \in \{0,1\}^P \\ |\boldsymbol{\epsilon}|\, \text{even}}} 
(-1)^{|\boldsymbol{\epsilon}|/2} \; 
\tilde{\underline{\mathbf{U}}}_0 \times_1 g_t^{(\epsilon_1)}(\mathbf{L}_1) \cdots \times_P g_t^{(\epsilon_P)}(\mathbf{L}_P),
\end{split}
\end{equation}}
where $g_t^{(0)}(\mathbf{L}_i) = \cos(t\mathbf{L}_i), \:\: g_t^{(1)}(\mathbf{L}_i) = \sin(t \mathbf{L}_i)$, $\boldsymbol{\epsilon} = (\epsilon_1, \dots, \epsilon_P) \in \{0,1\}^P$, with $\{0,1\}^P$ being the Cartesian product between $P$ sets of $\{0,1\}$. Besides, $|\boldsymbol{\epsilon}| := \sum_{i=1}^{P} \epsilon_i$.
\end{theorem}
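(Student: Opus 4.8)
The plan is to recast the tensorial equation \eqref{MDPDE} as an ordinary matrix wave equation through vectorization, solve it in closed form, and then expand the resulting matrix cosine by exploiting the Kronecker-sum structure. First I would introduce the linear operator $\mathcal{A}(\underline{\mathbf{U}}) := \sum_{i=1}^{P}\underline{\mathbf{U}}\times_i\mathbf{L}_i$ and observe that, because mode-$i$ and mode-$j$ multiplications commute for $i\neq j$, applying $\mathcal{A}$ twice yields $\mathcal{A}^2(\underline{\mathbf{U}}) = \sum_{i}\underline{\mathbf{U}}\times_i\mathbf{L}_i^2 + 2\sum_{i<j}\underline{\mathbf{U}}\times_i\mathbf{L}_i\times_j\mathbf{L}_j$. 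This is precisely the negative of the right-hand side of \eqref{MDPDE}, so the \method{} is the tensorial wave equation $\partial_t^2\underline{\mathbf{U}}_t = -\mathcal{A}^2(\underline{\mathbf{U}}_t)$.

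Next I would vectorize. Using the identity $\mathrm{vec}(\underline{\mathbf{U}}\times_1\mathbf{A}_1\cdots\times_P\mathbf{A}_P) = (\downarrow\otimes_{p=1}^{P}\mathbf{A}_p)\,\mathrm{vec}(\underline{\mathbf{U}})$, the operator $\mathcal{A}$ becomes left multiplication by the Kronecker sum $\mathbf{M} := \downarrow\oplus_{p=1}^{P}\mathbf{L}_p$. Writing $\mathbf{u}_t := \mathrm{vec}(\underline{\mathbf{U}}_t)$, the equation reduces to the constant-coefficient second-order ODE $\ddot{\mathbf{u}}_t = -\mathbf{M}^2\mathbf{u}_t$. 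With the prescribed initial data $\mathbf{u}_0$ and $\dot{\mathbf{u}}_0 = 0$, standard ODE theory gives the unique solution $\mathbf{u}_t = \cos(t\mathbf{M})\,\mathbf{u}_0$; note the zero initial velocity removes the usual $\sin$ term, so no invertibility of the singular matrix $\mathbf{M}$ is needed.

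The crux is evaluating $\cos(t\mathbf{M})$ for the Kronecker sum $\mathbf{M}$. Since the summands $\mathbf{I}\otimes\cdots\mathbf{L}_p\cdots\otimes\mathbf{I}$ pairwise commute, the matrix exponential factorizes as $e^{\pm it\mathbf{M}} = \downarrow\otimes_{p=1}^{P} e^{\pm it\mathbf{L}_p}$. Substituting $e^{\pm it\mathbf{L}_p} = \cos(t\mathbf{L}_p)\pm i\sin(t\mathbf{L}_p)$ and distributing the Kronecker product produces $e^{\pm it\mathbf{M}} = \sum_{\boldsymbol{\epsilon}\in\{0,1\}^P}(\pm i)^{|\boldsymbol{\epsilon}|}\,\downarrow\otimes_{p=1}^{P} g_t^{(\epsilon_p)}(\mathbf{L}_p)$, where $\epsilon_p$ records whether the sine factor (and its $i$) is chosen in mode $p$. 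Averaging via $\cos(t\mathbf{M}) = \tfrac12(e^{it\mathbf{M}}+e^{-it\mathbf{M}})$ leaves the coefficient $\tfrac12(i^{|\boldsymbol{\epsilon}|}+(-i)^{|\boldsymbol{\epsilon}|})$, which is $0$ when $|\boldsymbol{\epsilon}|$ is odd and equals $i^{|\boldsymbol{\epsilon}|}=(-1)^{|\boldsymbol{\epsilon}|/2}$ when $|\boldsymbol{\epsilon}|$ is even. De-vectorizing back to mode-multiplication form then reproduces \eqref{MDPDE_sol} exactly, and checking $t=0$ (only $\boldsymbol{\epsilon}=\mathbf{0}$ survives) recovers the initial value $\underline{\mathbf{U}}_0$.

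I expect the main obstacle to be bookkeeping rather than conceptual. The two places demanding care are justifying the commutativity that underlies both the $\mathcal{A}^2$ expansion and the exponential factorization, and keeping the Kronecker ordering ($\downarrow\otimes$ versus $\otimes$) consistent with the adopted vectorization convention so that the mode indices in the final tensor expression align. The parity evaluation of $\tfrac12(i^{|\boldsymbol{\epsilon}|}+(-i)^{|\boldsymbol{\epsilon}|})$ is then routine once the expansion is in place.
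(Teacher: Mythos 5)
Your proposal is correct and takes essentially the same route as the paper: both reduce the tensorial equation via vectorization to $\ddot{\mathbf{u}}_t = -\mathbf{L}_\diamond^2\,\mathbf{u}_t$ with $\mathbf{L}_\diamond$ the Kronecker sum, and both obtain the expansion of $\cos(t\mathbf{L}_\diamond)$ from $e^{\pm\mathrm{i}t\mathbf{L}_\diamond}=\downarrow\otimes_{p=1}^{P}e^{\pm\mathrm{i}t\mathbf{L}_p}$ together with the parity argument on $|\boldsymbol{\epsilon}|$ (the paper's Lemma~1). The only difference is direction of argument — you solve the vectorized ODE forward and invoke uniqueness, while the paper verifies that the stated formula satisfies the PDE — which is an immaterial reorganization (and your explicit appeal to uniqueness, avoiding any need to invert the singular $\mathbf{L}_\diamond$, is a small point the paper leaves implicit).
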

Let $\underline{\mathbf{U}}_l\in\mathbb{R}^{N_1\times\hdots \times N_P\times F_l}$ be the input tensor with $F_l$ being the number of features, $t$ the learnable receptive fields, $\mathbf{W}_{l}\in\mathbb{R}^{F_l\times F_{l+1}}$ a matrix of learnable parameters, and $f(\underline{\mathbf{U}}_l)\in\mathbb{R}^{N_1\times\hdots \times N_P\times F_{l+1}}$ the output tensor. Using Theorem \ref{MDPDE_thm}, the core function of our framework is defined as:
{\small
\begin{equation}
\label{tensor_formm2}
\begin{split}
&f(\underline{\mathbf{U}}_l) = \\
&\sum_{\substack{\boldsymbol{\epsilon} \in \{0,1\}^P \\ |\boldsymbol{\epsilon}|\, \text{even}}} 
(-1)^{|\boldsymbol{\epsilon}|/2} \; 
\underline{\mathbf{U}}_l \times_1 g_{t}^{(\epsilon_1)}(\mathbf{L}_1) \cdots \times_P g_{t}^{(\epsilon_P)}(\mathbf{L}_P)\times_{P+1}\mathbf{W}^\top_{l}.
\end{split}
\end{equation}}
Thus, the core graph filtering operation is obtained on a PG as follows (see Figure \ref{fig:pipeline} for a high-level picture):
\begin{proposition}
\label{prop:MDPDE_theorem}
The core function of \method~in \eqref{tensor_formm2}, \ie $\underline{\mathbf{U}}_{l+1}=f(\underline{\mathbf{U}}_l)$, can be rewritten as:
\begin{equation}
\label{imp_eq}
\left[f(\underline{\mathbf{U}}_l)_{(P+1)}\right]^\top=\cos(t\mathbf{L}_\diamond){[\underline{\mathbf{U}}_l}_{(P+1)}]^\top\mathbf{W}_{l},
\end{equation}
where $\mathbf{L}_\diamond \vcentcolon= \downarrow\oplus_{p=1}^{P}{\mathbf{L}_p}$ is the Laplacian of the Cartesian PG.
\end{proposition}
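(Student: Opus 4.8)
The plan is to unfold the tensor identity \eqref{tensor_formm2} along the feature mode $P+1$ and then recognize the resulting signed sum of Kronecker products as a single matrix cosine of the Cartesian product Laplacian $\mathbf{L}_\diamond$.

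First I would matricize each summand of \eqref{tensor_formm2}. Using the standard relation between mode-$n$ products and Kronecker products (the Kolda--Bader convention of \cite{kolda2009tensor}), unfolding the multilinear product $\underline{\mathbf{U}}_l \times_1 g_t^{(\epsilon_1)}(\mathbf{L}_1)\cdots\times_P g_t^{(\epsilon_P)}(\mathbf{L}_P)\times_{P+1}\mathbf{W}_l^\top$ along mode $P+1$ yields $\mathbf{W}_l^\top[\underline{\mathbf{U}}_l]_{(P+1)}\bigl(g_t^{(\epsilon_P)}(\mathbf{L}_P)\otimes\cdots\otimes g_t^{(\epsilon_1)}(\mathbf{L}_1)\bigr)^\top$, where the reversed ordering of the Kronecker factors is exactly the $\downarrow\otimes$ convention. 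Transposing, and using that each $\mathbf{L}_p$ is symmetric so that $\cos(t\mathbf{L}_p)$ and $\sin(t\mathbf{L}_p)$ — and hence their Kronecker products — are symmetric, the transpose on the Kronecker factor is absorbed. Summing over the even-weight $\boldsymbol{\epsilon}$ then gives
{\small
\begin{equation*}
[f(\underline{\mathbf{U}}_l)_{(P+1)}]^\top=\Bigg(\sum_{\substack{\boldsymbol{\epsilon}\in\{0,1\}^P\\|\boldsymbol{\epsilon}|\,\text{even}}}(-1)^{|\boldsymbol{\epsilon}|/2}\,g_t^{(\epsilon_P)}(\mathbf{L}_P)\otimes\cdots\otimes g_t^{(\epsilon_1)}(\mathbf{L}_1)\Bigg)[\underline{\mathbf{U}}_l]_{(P+1)}^\top\mathbf{W}_l,
\end{equation*}}
so it remains only to identify the bracketed operator with $\cos(t\mathbf{L}_\diamond)$.

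The key algebraic step is to prove that this signed sum equals $\cos(t\mathbf{L}_\diamond)$ with $\mathbf{L}_\diamond=\downarrow\oplus_{p=1}^{P}\mathbf{L}_p$. I would argue through the complex exponential. Since a Kronecker sum converts to a Kronecker product under the matrix exponential, $e^{\mathrm{i} t\mathbf{L}_\diamond}=e^{\mathrm{i} t\mathbf{L}_P}\otimes\cdots\otimes e^{\mathrm{i} t\mathbf{L}_1}$, and since each factor satisfies Euler's identity $e^{\mathrm{i} t\mathbf{L}_p}=\cos(t\mathbf{L}_p)+\mathrm{i}\sin(t\mathbf{L}_p)$, expanding the Kronecker product over the binary choices $\epsilon_p\in\{0,1\}$ (with $\epsilon_p=0$ selecting $\cos$ and $\epsilon_p=1$ selecting $\mathrm{i}\sin$) gives $e^{\mathrm{i} t\mathbf{L}_\diamond}=\sum_{\boldsymbol{\epsilon}\in\{0,1\}^P}\mathrm{i}^{|\boldsymbol{\epsilon}|}\,g_t^{(\epsilon_P)}(\mathbf{L}_P)\otimes\cdots\otimes g_t^{(\epsilon_1)}(\mathbf{L}_1)$. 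Taking the real part, only the even-$|\boldsymbol{\epsilon}|$ terms survive, and for those $\mathrm{i}^{|\boldsymbol{\epsilon}|}=(-1)^{|\boldsymbol{\epsilon}|/2}$, which reproduces exactly the bracketed sum and yields \eqref{imp_eq}.

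I expect the main obstacle to be bookkeeping rather than conceptual: matching the index ordering of the mode products to the reversed ($\downarrow$) Kronecker ordering in $\mathbf{L}_\diamond$, and checking that the symmetry of the operators is precisely what lets the transpose in the unfolding be absorbed. A secondary point worth stating carefully is that $\cos(t\mathbf{L}_p)$, $\sin(t\mathbf{L}_p)$, and $e^{\mathrm{i} t\mathbf{L}_p}$ are defined as matrix functions through the spectral decomposition of the symmetric $\mathbf{L}_p$; this both justifies Euler's identity at the matrix level and guarantees the exponential-of-Kronecker-sum factorization, since the summands $\mathbf{I}\otimes\cdots\otimes\mathbf{L}_p\otimes\cdots\otimes\mathbf{I}$ pairwise commute.
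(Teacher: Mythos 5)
Your proposal is correct and takes essentially the same approach as the paper: the paper likewise performs the mode-$(P+1)$ unfolding into $\mathbf{W}_l^\top {\underline{\mathbf{U}}_l}_{(P+1)}\left[\downarrow\otimes_{p=1}^P g_t^{(\epsilon_p)}(\mathbf{L}_p)\right]^\top$, transposes, and then invokes its Lemma~\ref{lemma:Cosine_prod}, whose proof is exactly your complex-exponential argument ($e^{\mathrm{i}t\mathbf{L}_\diamond}=\downarrow\otimes_{p=1}^P e^{\mathrm{i}t\mathbf{L}_p}$, Euler's identity in each factor, then taking real parts so only even-$|\boldsymbol{\epsilon}|$ terms survive with sign $(-1)^{|\boldsymbol{\epsilon}|/2}$). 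The only cosmetic difference is that you invoke symmetry of $\cos(t\mathbf{L}_p)$ and $\sin(t\mathbf{L}_p)$ to absorb the transpose on the Kronecker factor, which is fine but not strictly needed, since that transpose already cancels algebraically when the whole unfolded expression is transposed.
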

Proposition \ref{prop:MDPDE_theorem} provides the theoretical basis for the implementation of our framework.
To compute the solution efficiently, we can use the spectral decompositions of the factor and product Laplacians $\{\mathbf{L}_p = \mathbf{V}_p \boldsymbol{\Lambda}_p \mathbf{V}_p^\top\}_{p=1}^P$ and $\{\mathbf{L}_\diamond = \mathbf{V}_\diamond \boldsymbol{\Lambda}_\diamond \mathbf{V}_\diamond^\top\}$, respectively. 
Here, the eigenvectors of the PG are obtained as $\mathbf{V}_\diamond = \downarrow \otimes_{p=1}^P \mathbf{V}_p$, and the corresponding eigenvalues follow $\boldsymbol{\Lambda}_\diamond = \downarrow \oplus_{p=1}^P \boldsymbol{\Lambda}_p$ \cite{sandryhaila2014big}.
For each factor graph, we choose the top $K_p \leq N_p$ eigenvalue–eigenvector pairs to reduce cost.
More precisely, it can be shown \cite{behmanesh2023tide} that Eqn.~\eqref{imp_eq} can be equivalently expressed in the following form:
\begin{equation}
\begin{split}
&\left[f(\underline{\mathbf{U}}_l)_{(P+1)}\right]^\top = \\
&\mathbf{V}^{(K_p)}_\diamond\left(\underbrace{\overbrace{[\tilde{\boldsymbol{\lambda}}_l|\hdots|\tilde{\boldsymbol{\lambda}}_l]}^{F_l\:\text{times}}}_{\tilde{\boldsymbol{\Lambda}}_l}\odot\left({\mathbf{V}^{(K_p)}_\diamond}^\top\left[{\underline{\mathbf{U}}_{l}}_{(P+1)}\right]^\top\right)\right)\mathbf{W}_{l},
\end{split}
\label{eff_imp}
\end{equation}
\begin{equation}
\label{lamb_V}
\begin{split}
\text{with}\,\,\,\,\,\,\,\,\,&\tilde{\boldsymbol{\lambda}}_l=\sum_{\substack{\boldsymbol{\epsilon} \in \{0,1\}^P\\|\boldsymbol{\epsilon}|\, \text{even}}} 
{(-1)^{|\boldsymbol{\epsilon}|/2}\downarrow\otimes_{p=1}^{P}{g_{t}^{(\epsilon_p)}(\boldsymbol{\lambda}^{(K_p)}_p)}},\\
&\mathbf{V}^{(K_p)}_\diamond=\downarrow\otimes_{p=1}^{P}{\mathbf{V}^{(K_p)}_p}.
\end{split}
\end{equation}
Here, $\boldsymbol{\lambda}^{(K_p)}_p \in \mathbb{R}^{K_p \times 1}$ and 
$\mathbf{V}^{(K_p)}_p \in \mathbb{R}^{N_p \times K_p}$ represent the top $K_p \leq N_p$ 
eigenvalues and their corresponding eigenvectors of $\mathbf{L}_p$, 
selected according to the largest magnitudes of the eigenvalues.  
The symbol $\odot$ denotes the element-wise (Hadamard) product.
Then, the output of the $(l)$-th layer in our model is expressed as  
${\underline{\mathbf{U}}_{l+1}}_{(P+1)} = 
\sigma \left( f(\underline{\mathbf{U}}_l)_{(P+1)} \right)$,  
where $\sigma(\cdot)$ denotes a suitable nonlinear activation function.
\begin{remark}
\label{remark_1}
The eigenvalue decomposition (EVD) of the PG Laplacian $\mathbf{L}_\diamond$ in \eqref{imp_eq} has a computational complexity of $\mathcal{O}([\prod_{p=1}^{P} N_p]^3)$ in the general case. However, leveraging the separable structure of PGs significantly reduces the complexity to $\mathcal{O}(\sum_{p=1}^{P} N_p^3)$ in \eqref{eff_imp}, since EVD is performed independently on each factor graph. 
Furthermore, if we retain only the top $K_p$ most significant eigenvalue-eigenvector pairs \cite{behmanesh2023tide} per factor graph, the cost of spectral decomposition is further reduced to $\mathcal{O}(N_p^2 K_p)$ for each factor, resulting in an overall complexity of $\mathcal{O}(\sum_{p=1}^{P} N_p^2 K_p)$.
\end{remark}
\subsection{Stability Analysis}
\label{Stability_Analysis}
Inspired by \cite{song2022robustness},
an additive perturbation model on the $p$-th graph is defined by adding an bounded error matrix $\mathbf{E}_p$ (w.r.t. a matrix norm $\vertiii{\cdot}$) to its Laplacian $\mathbf{L}_p$ as:
\begin{equation}
\label{factor_perturb_bounds}
\tilde{\mathbf{L}}_p=\mathbf{L}_p+\mathbf{E}_p;\:\:\:\vertiii{\mathbf{E}_p}\le\varepsilon_p.
\end{equation}
Now, let $\varphi(u,t)$ and $\tilde{\varphi}(u,t)$ denote the outputs of our model under the original and perturbed Laplacian, respectively.
The Laplacian $\mathbf{L}_\diamond$ governs the evolution of ${\underline{\mathbf{U}}_t}_{(P+1)}$ through the flow PDE $\frac{\partial^2 \varphi(u,t)}{\partial t^2} = -\mathbf{L}_\diamond^2 \varphi(u,t)$ as described in \eqref{imp_eq}, with the corresponding solution $\varphi(u,t) = \cos(t \mathbf{L}_\diamond) \varphi(u,0)$, assuming $\left. \frac{\partial\varphi(u,t)}{\partial t}\right|_{t=0}=0$.
The following theorem establishes that, for Cartesian PGs, the error bound on the stability of our model admits the integrated sum of stability over the factor graphs.
\begin{theorem}
\label{thm_e1e2}
Consider the perturbation model in \eqref{factor_perturb_bounds} for $p=1,\hdots,P$ with factor-wise error bounds $\{\vertiii{\mathbf{E}_p} \le \varepsilon_p\}_{p=1}^P$. Let a Cartesian PG (assumed to have no isolated nodes) with true and perturbed Laplacians $\mathbf{L}_\diamond=\oplus_{p=1}^{P}{\mathbf{L}_p}$ and $\tilde{\mathbf{L}}_\diamond=\oplus_{p=1}^{P}{\tilde{\mathbf{L}}_p}$, respectively.
Then, the stability on the true and perturbed outputs 
$\varphi(u,t)$ and $\tilde{\varphi}(u,t)$, respectively, is given by the sum of the individual stability on each factor graph:
\begin{equation}
\vertiii{\varphi(u,t)-\tilde{\varphi}(u,t)}=\sum_{p=1}^{P}{\mathcal{O}(\varepsilon_p)}.
\end{equation}
\end{theorem}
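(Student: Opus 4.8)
The plan is to reduce the claim to an operator–norm Lipschitz estimate for the matrix cosine and then exploit the additive structure of the Kronecker sum. Since the two solutions share the same initial condition $\varphi(u,0)$, I would first write $\varphi(u,t)-\tilde{\varphi}(u,t) = \bigl(\cos(t\mathbf{L}_\diamond)-\cos(t\tilde{\mathbf{L}}_\diamond)\bigr)\varphi(u,0)$, so by sub-multiplicativity it suffices to control $\vertiii{\cos(t\mathbf{L}_\diamond)-\cos(t\tilde{\mathbf{L}}_\diamond)}$. The key structural observation is that the Kronecker sum is additive in each factor: combining \eqref{factor_perturb_bounds} with $\downarrow\oplus_{p}\tilde{\mathbf{L}}_p = \downarrow\oplus_p \mathbf{L}_p + \mathbf{E}_\diamond$, the product-Laplacian perturbation is exactly the sum of the lifted factor perturbations,
\[
\mathbf{E}_\diamond := \sum_{p=1}^P \mathbf{I}_{N_P}\otimes\cdots\otimes\mathbf{E}_p\otimes\cdots\otimes\mathbf{I}_{N_1}.
\]
By the triangle inequality, and because tensoring a matrix with identities leaves its spectral norm unchanged, $\vertiii{\mathbf{E}_\diamond}\le \sum_{p=1}^P\vertiii{\mathbf{E}_p}\le\sum_{p=1}^P\varepsilon_p$.

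Next I would establish the Lipschitz bound $\vertiii{\cos(t\mathbf{L}_\diamond)-\cos(t\tilde{\mathbf{L}}_\diamond)}\le t\,\vertiii{\mathbf{E}_\diamond}$. The cleanest route uses that $\mathbf{L}_\diamond$ and $\tilde{\mathbf{L}}_\diamond$ are symmetric (a Kronecker sum of symmetric Laplacians is symmetric), so I would write $\cos(t\mathbf{M})=\tfrac12(e^{it\mathbf{M}}+e^{-it\mathbf{M}})$ and apply Duhamel's identity to each exponential,
\[
e^{it\tilde{\mathbf{L}}_\diamond}-e^{it\mathbf{L}_\diamond}= \int_0^t e^{i(t-s)\tilde{\mathbf{L}}_\diamond}\,(i\,\mathbf{E}_\diamond)\,e^{is\mathbf{L}_\diamond}\,ds.
\]
Since exponentials of (skew-)Hermitian matrices are unitary and hence have unit spectral norm, the integrand is bounded by $\vertiii{\mathbf{E}_\diamond}$, giving $\vertiii{e^{\pm it\tilde{\mathbf{L}}_\diamond}-e^{\pm it\mathbf{L}_\diamond}}\le t\,\vertiii{\mathbf{E}_\diamond}$; averaging the two terms yields the cosine estimate. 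Equivalently, one may diagonalize both Laplacians and invoke the divided-difference (Daleckii–Krein) representation with the scalar bound $|\cos(t\lambda)-\cos(t\mu)|\le t\,|\lambda-\mu|$ to obtain the same constant in the Frobenius norm; by equivalence of norms on finite-dimensional spaces the resulting $\mathcal{O}$-statement is independent of the choice of $\vertiii{\cdot}$. Chaining the estimates then gives $\vertiii{\varphi(u,t)-\tilde{\varphi}(u,t)}\le t\,\vertiii{\varphi(u,0)}\sum_{p}\varepsilon_p=\sum_{p=1}^P\mathcal{O}(\varepsilon_p)$, which is the assertion; the no-isolated-node hypothesis is invoked only to keep the factor Laplacians, and hence the flow, well defined.

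The main obstacle I anticipate is the second step: matrix functions are in general \emph{not} operator-Lipschitz with the scalar Lipschitz constant, and a naive power-series or commutator expansion of $\cos(t\tilde{\mathbf{L}}_\diamond)$ would leave cross terms involving $\mathbf{L}_\diamond\mathbf{E}_\diamond-\mathbf{E}_\diamond\mathbf{L}_\diamond$ and growing factors of $\vertiii{\mathbf{L}_\diamond}$ that do not cancel. The essential trick that avoids this is to pass through the unitary group generated by the \emph{first-order} operator $i\mathbf{L}_\diamond$, rather than the second-order $\mathbf{L}_\diamond^2$ that literally appears in the flow, so that the Duhamel integrand stays uniformly bounded independent of the spectrum; symmetry of the factor Laplacians is exactly what makes this possible. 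A minor point to verify is that the shared-initial-condition convention is what lets $\varphi(u,0)$ be pulled out as a constant, so that the final bound depends on the perturbation only through $\sum_{p}\varepsilon_p$.
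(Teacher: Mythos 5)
Your proposal is correct and follows essentially the same route as the paper: reduce the difference of outputs to $\bigl(\cos(t\mathbf{L}_\diamond)-\cos(t\tilde{\mathbf{L}}_\diamond)\bigr)\varphi(u,0)$, prove the operator-Lipschitz bound $\vertiii{\cos(t\mathbf{A})-\cos(t\mathbf{B})}\le t\vertiii{\mathbf{A}-\mathbf{B}}$ via Euler's formula, Duhamel's identity, and unitarity of $e^{\pm it\mathbf{M}}$ for Hermitian $\mathbf{M}$, and combine it with the Kronecker-sum structure $\mathbf{E}_\diamond=\oplus_{p}\mathbf{E}_p$ giving $\vertiii{\mathbf{E}_\diamond}\le\sum_p\varepsilon_p$. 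The only cosmetic difference is that you prove the perturbation decomposition and its norm bound inline (triangle inequality plus norm invariance under tensoring with identities), whereas the paper cites this as a proposition imported from the CITRUS paper.
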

Theorem \ref{thm_e1e2} states that the solution of the \method~\eqref{MDPDE_sol}  remains stable w.r.t. the magnitude of perturbations in the factor graphs. This robustness is particularly important when the factor graphs are subject to errors.

\begin{table*}[!t]
\caption{Traffic forecasting results on \texttt{MetrLA} dataset (\textbf{bold} and \underline{underlined} denote the best and second best, respectively).}
\vskip -6mm
\label{Table_SimpleBaseline}
\begin{center}
\resizebox{.9\textwidth}{!}{
\begin{tabular}{rccccccccr}
\toprule
\multirow{2}{*}{Method}&
\multicolumn{3}{c}{$H=3$}&
\multicolumn{3}{c}{$H=6$}&
\multicolumn{3}{c}{$H=12$}\\
\cmidrule(l){2-4}
\cmidrule(l){5-7}
\cmidrule(l){8-10}

& MAE& MAPE & RMSE &

MAE& MAPE & RMSE &

MAE& MAPE & RMSE\\ 

\midrule

ARIMA \cite{li2018diffusion}&

3.99&9.60\%&8.21&

5.15 & 12.70\% & 10.45 &

6.90 & 17.40\% & 13.23 \\

G-VARMA~\cite{isufi2019forecasting}                 
& 3.60 & 9.62\% & 6.89
& 4.05 & 11.22\% & 7.84
& 5.12 & 14.00\% & 9.58  \\

FC-LSTM~\cite{li2018diffusion}   
& 3.44 & 9.60\% & 6.30 
& 3.77 & 10.90\% & 7.23 
& 4.37 & 13.20\% & 8.69 \\

Graph Wavenet~\cite{wu2019graph}           
& 2.69 &  {6.90\%} & {5.15} 

&  {3.07} &  {8.37\%} &  {6.22} 

& {3.53} & 10.01\% &  {7.37} \\



GGRNN~\cite{ruiz2020gated}                  
& 2.73 & 7.12\% & 5.44 
& 3.31 & 8.97\% & 6.63 
& 3.88 & 10.59\% & 8.14          \\

GRUGCN \cite{gao2022equivalence} 
& 2.69 & \textbf{6.61\%} & {5.15}
& {3.05} & {7.96\%} & {6.04} 
&  {3.62} & {9.92\%} & {7.33} \\

SGP \cite{cini2023scalable} 
& 3.06 & 7.31\% & 5.49
& {3.43} & {8.54\%} & {6.47} 
& 4.03 & 10.53\% & 7.81 \\

GTCNN \cite{sabbaqi2023graph} 
& \underline{2.68} & 6.85\% &  {5.17}
& {3.02} & {8.30\%} & {6.20} 
& {3.55} & {10.21\%} & 7.35 \\

CITRUS \cite{einizade2024continuous}&

2.70&6.74\%&\underline{5.14}&

\underline{2.98}&\underline{7.78}\%&\underline{5.90}&

\underline{3.44}&\textbf{9.28}\%&\underline{6.85}\\

\midrule

\textbf{\method} (Ours)&\textbf{2.66}&\underline{6.63}\%&\textbf{5.09}&\textbf{2.96}&\textbf{7.72}\%&\textbf{5.87}&\textbf{3.38}&\underline{9.29}\%&\textbf{6.84}\\

\bottomrule

\end{tabular}
}
\end{center}
\vskip -7mm
\end{table*}

\subsection{Over-smoothing Analysis}
\label{Oversmoothing}
Over-smoothing in GNNs is commonly characterized by the decay of Dirichlet energy as the number of layers increases, \ie making the graph signals as smooth as possible on the underlying graph \cite{ortega2018graph}. This phenomenon has been theoretically studied in prior works \cite{giovanni2023understanding,rusch2023survey,han2024continuous} and can be formulated for the output of a continuous GNN \cite{han2024continuous}. Specifically, consider the GNN output $\mathbf{U}_t = [\mathbf{u}(t)_1, \ldots, \mathbf{u}(t)_N]^\top$, where $\mathbf{u}(t)_i$ represents the feature vector of node $i$ with degree $\text{deg}_i$. The over-smoothing can then be defined \cite{Oono2020Graph} as $\lim_{t\rightarrow\infty}{E(\mathbf{U}_t)}\rightarrow 0$, with the Dirichlet energy as:
\begin{equation}
\label{e_dir}
\begin{split}
E(\mathbf{U})\vcentcolon=\frac{1}{2}\sum_{(i,j)\in\mathcal{E}}{\left\|\frac{\mathbf{u}_i}{\sqrt{\text{deg}_i}}-\frac{\mathbf{u}_j}{\sqrt{\text{deg}_j}}\right\|^2}=\tr(\mathbf{U}^\top\hat{\mathbf{L}}\mathbf{U}).
\end{split}
\end{equation}
Motivated by \eqref{e_dir}, 
we proceed with a formal analysis of over-smoothing in our model. For the $l$-th layer, consisting of $H_l$ hidden MLP layers, nonlinear activations $\sigma(\cdot)$, and learnable weight matrices $\{\mathbf{W}_{lh}\}_{h=1}^{H_l}$, we define:
\begin{equation}
\label{defs}
\begin{split}
&\mathbf{X}_{l+1}\vcentcolon=f_l(\mathbf{X}_{l}),\:\:\:f_l(\mathbf{X})\vcentcolon=\text{MLP}_l(\cos(t\hat{\mathbf{L}}_{\diamond})\mathbf{X}),\\
&\text{MLP}_l(\mathbf{X})\vcentcolon=\sigma(\hdots\sigma(\sigma(\mathbf{X})\mathbf{W}_{l1})\mathbf{W}_{l2}\hdots\mathbf{W}_{lH_l}),    
\end{split}
\end{equation}
Here, $\mathbf{X}_{l}:=[(\underline{\mathbf{U}}_{l})_{(P+1)}]^\top$ in Proposition \ref{prop:MDPDE_theorem}, and $f_l(\cdot)$ represents a generalized form of \eqref{imp_eq}. In the term $cos(t\hat{\mathbf{L}}_\diamond)$, $\hat{\mathbf{L}}_\diamond$ is defined as the normalized PG Laplacian $\hat{\mathbf{L}}_\diamond \coloneqq \frac{1}{P} \oplus_{p=1}^{P} \hat{\mathbf{L}}_p$, implying a useful spectral property that the spectrum of $\hat{\mathbf{L}}$ spans the interval of $[0,2]$ \cite{einizade2024continuous}. 
The following theorem provides the over-smoothing analysis.
\begin{theorem}
\label{cos_ener_prod}
With $E(.)$ and $\sigma(\cdot)$ being the Dirichlet energy and ReLU or Leaky ReLU in \eqref{defs}, respectively, we have: 
\begin{equation}
\begin{split}
&E(\mathbf{X}_{l})\le (s\cos^2{(t\lambda^\diamond_\phi)})^l E(\mathbf{X}),\\
\end{split}
\end{equation}
where\,\,
{\small\begin{equation}
\cos{(t\lambda^\diamond_\phi)}=\sum_{\substack{S \subseteq \{1,\dots,P\}\\ |S| \ \mathrm{even}}}(-1)^{|S|/2}\prod_{p \in S} \sin(t\lambda^{(p)}_{i_p})\prod_{p \notin S} \cos(t\lambda^{(p)}_{i_p}).
\end{equation}}
Here, $s\vcentcolon=\sup_{l\in\mathbb{N}_{+}}{s_l}$ and $s_l\vcentcolon=\prod_{h=1}^{H_l}{s_{lh}}$ with $s_{lh}$ being the square of maximum singular value of $\mathbf{W}^\top_{lh}$, $\cos^2{(t\lambda^\diamond_\phi)}=\max_{i}{\cos^2{(t\lambda^{\diamond}_i)}}$ and $\lambda^{\diamond}_{\phi}=\sum_{p=1}^{P}{\lambda^{(p)}_{i_p}}$, where $\lambda^{\diamond}_i$ and $\lambda^{(p)}_{i}$ are the $i$-th non-zero eigenvalues of $\hat{\mathbf{L}}_\diamond$ and $\hat{\mathbf{L}}_p$, respectively.
\end{theorem}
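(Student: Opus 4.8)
The plan is to split each layer map $f_l$ into its two parts---the spectral cosine filter $\cos(t\hat{\mathbf{L}}_\diamond)$ and the pointwise $\mathrm{MLP}_l$---and to control how the Dirichlet energy $E(\mathbf{X})=\tr(\mathbf{X}^\top\hat{\mathbf{L}}_\diamond\mathbf{X})$ contracts under each, then telescope over layers. I would first handle the filter. Since $\cos(t\hat{\mathbf{L}}_\diamond)$ is a matrix function of $\hat{\mathbf{L}}_\diamond$, it is diagonalized by the same eigenbasis $\mathbf{V}_\diamond$ and commutes with $\hat{\mathbf{L}}_\diamond$. Setting $\mathbf{Y}=\cos(t\hat{\mathbf{L}}_\diamond)\mathbf{X}$ and moving to the graph Fourier domain $\hat{\mathbf{X}}=\mathbf{V}_\diamond^\top\mathbf{X}$, the energy reads $E(\mathbf{Y})=\sum_i\cos^2(t\lambda^\diamond_i)\,\lambda^\diamond_i\,\|\hat{\mathbf{x}}_i\|^2$, where $\hat{\mathbf{x}}_i$ is the $i$-th Fourier coefficient row. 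The components with $\lambda^\diamond_i=0$ vanish, so bounding each surviving $\cos^2(t\lambda^\diamond_i)$ by its maximum over the nonzero spectrum gives $E(\mathbf{Y})\le\cos^2(t\lambda^\diamond_\phi)\,E(\mathbf{X})$ with $\cos^2(t\lambda^\diamond_\phi)=\max_i\cos^2(t\lambda^\diamond_i)$.

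Next I would bound the $\mathrm{MLP}_l$. For a single linear map $\mathbf{Z}\mapsto\mathbf{Z}\mathbf{W}_{lh}$, using the cyclic trace identity with $\mathbf{M}=\mathbf{Z}^\top\hat{\mathbf{L}}_\diamond\mathbf{Z}\succeq\mathbf{0}$ gives $E(\mathbf{Z}\mathbf{W}_{lh})=\tr(\mathbf{M}\,\mathbf{W}_{lh}\mathbf{W}_{lh}^\top)\le\lambda_{\max}(\mathbf{W}_{lh}\mathbf{W}_{lh}^\top)\,\tr(\mathbf{M})=s_{lh}\,E(\mathbf{Z})$, where the inequality is the standard bound $\tr(\mathbf{M}\mathbf{N})\le\lambda_{\max}(\mathbf{N})\tr(\mathbf{M})$ for positive semidefinite $\mathbf{M},\mathbf{N}$. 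For the activation I would show that ReLU and Leaky ReLU are non-expansive for the degree-normalized energy by proving the elementary edge-and-feature-wise inequality $(\alpha\sigma(a)-\beta\sigma(b))^2\le(\alpha a-\beta b)^2$ for all $\alpha,\beta\ge 0$; summing it over edges with $\alpha=1/\sqrt{\deg_i}$ and $\beta=1/\sqrt{\deg_j}$ yields $E(\sigma(\mathbf{Z}))\le E(\mathbf{Z})$. Chaining these estimates through the $H_l$ weights and the interleaved activations of $\mathrm{MLP}_l$ produces $E(\mathrm{MLP}_l(\mathbf{Y}))\le s_l\,E(\mathbf{Y})$ with $s_l=\prod_{h}s_{lh}$.

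Composing the filter and $\mathrm{MLP}$ estimates gives the one-step contraction $E(\mathbf{X}_{l+1})\le s_l\cos^2(t\lambda^\diamond_\phi)\,E(\mathbf{X}_l)$; since $\cos^2(t\lambda^\diamond_\phi)$ is layer-independent and $s_l\le s$, iterating over $l$ layers yields the claimed $E(\mathbf{X}_l)\le(s\cos^2(t\lambda^\diamond_\phi))^l E(\mathbf{X})$. It then remains to obtain the explicit factorized form of $\cos(t\lambda^\diamond_\phi)$: because $\hat{\mathbf{L}}_\diamond=\tfrac{1}{P}\oplus_{p=1}^P\hat{\mathbf{L}}_p$ is a scaled Kronecker sum, its eigenvalues are sums $\lambda^\diamond_\phi=\sum_p\lambda^{(p)}_{i_p}$ of factor eigenvalues, and expanding $\cos\!\big(t\sum_p\lambda^{(p)}_{i_p}\big)=\operatorname{Re}\prod_p\big(\cos(t\lambda^{(p)}_{i_p})+i\sin(t\lambda^{(p)}_{i_p})\big)$ and collecting the real terms (those with an even number of sine factors) reproduces the stated sum over even-cardinality subsets $S$ with sign $(-1)^{|S|/2}$.

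The step I expect to be the main obstacle is the activation bound: establishing that the degree-normalized Dirichlet energy does not increase under ReLU/Leaky ReLU, since the unequal weights $1/\sqrt{\deg_i}\neq 1/\sqrt{\deg_j}$ block the naive appeal to the $1$-Lipschitz property of $\sigma$. The fix is the sharpened componentwise inequality above, verified by a sign-case analysis on $(a,b)$: mixed signs exploit the monotonicity of $\sigma$ (which only widens the gap on the original side), while equal signs use the contraction slope $\gamma\in[0,1]$ of Leaky ReLU. A secondary subtlety is restricting the spectral maximum to the nonzero eigenvalues so that the constant (smoothest) component is correctly excluded from the energy bound.
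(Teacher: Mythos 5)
Your proposal is correct and follows essentially the same route as the paper: a spectral bound $E(\cos(t\hat{\mathbf{L}}_\diamond)\mathbf{X})\le\cos^2(t\lambda^\diamond_\phi)E(\mathbf{X})$ restricted to nonzero eigenvalues, the bounds $E(\mathbf{Z}\mathbf{W}_{lh})\le s_{lh}E(\mathbf{Z})$ and $E(\sigma(\mathbf{Z}))\le E(\mathbf{Z})$, telescoping over layers, and the even-subset expansion of $\cos\bigl(t\sum_p\lambda^{(p)}_{i_p}\bigr)$ via the Kronecker-sum spectrum. The only differences are cosmetic: you prove the two energy lemmas that the paper simply cites from prior work (via the PSD trace inequality and the sign-case analysis, both valid), and you derive the trigonometric expansion by taking real parts of $\prod_p(\cos+\mathrm{i}\sin)$ rather than by the paper's induction on $P$ --- which is in fact the same technique the paper uses elsewhere for its Lemma 1.
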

\begin{corollary}
\label{corrl}
If $s\cos^2{(t\lambda^\diamond_\phi)}<1$ and $l\rightarrow\infty$, \(E(\mathbf{X}_{l})\) exponentially converges to \(0\), leading to:
{\small
\begin{equation} 
\label{eq:corl}
t \in \bigcup_{k \in \mathbb{Z}} \frac{1}{2\lambda^\diamond_\phi} ( 2k\pi + \arccos(\frac{2}{s}-1), 2(k+1)\pi - \arccos(\frac{2}{s}-1)).
\end{equation}
}
\end{corollary}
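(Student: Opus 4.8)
The plan is to read Corollary \ref{corrl} as nothing more than solving, for $t$, the scalar contraction condition hidden in Theorem \ref{cos_ener_prod}. That theorem gives $E(\mathbf{X}_l)\le r^l\,E(\mathbf{X})$ with geometric ratio $r\vcentcolon= s\cos^2(t\lambda^\diamond_\phi)\ge 0$. Hence $E(\mathbf{X}_l)\to 0$ exponentially in $l$ precisely when $0\le r<1$; since $r\ge 0$ always holds, the whole statement reduces to characterizing the set of $t$ for which $r<1$. I would also recall from the theorem that $\cos^2(t\lambda^\diamond_\phi)=\max_i\cos^2(t\lambda^\diamond_i)$, so that this single inequality simultaneously controls every spectral mode of $\hat{\mathbf{L}}_\diamond$ and the bound is a genuine upper bound on the Dirichlet energy.

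First I would rewrite $s\cos^2(t\lambda^\diamond_\phi)<1$ as $\cos^2(t\lambda^\diamond_\phi)<1/s$ and linearize the square via the double-angle identity $\cos^2\theta=\tfrac12(1+\cos 2\theta)$ with $\theta=t\lambda^\diamond_\phi$. This turns the condition into $\cos(2t\lambda^\diamond_\phi)<\tfrac{2}{s}-1$, an elementary inequality of the form $\cos\phi<c$ with $\phi=2t\lambda^\diamond_\phi$ and $c=\tfrac{2}{s}-1$.

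Next I would solve $\cos\phi<c$ directly. On a single period the cosine drops below $c$ exactly on the open arc from $\arccos c$ to $2\pi-\arccos c$, so $\cos\phi<c$ holds iff $\phi\in\bigcup_{k\in\mathbb{Z}}\bigl(2k\pi+\arccos c,\;2(k+1)\pi-\arccos c\bigr)$. Substituting $\phi=2t\lambda^\diamond_\phi$ and dividing through by $2\lambda^\diamond_\phi>0$ (a sum of non-zero, hence strictly positive, Laplacian eigenvalues) yields exactly the union of intervals in \eqref{eq:corl}, with $\arccos c=\arccos(\tfrac{2}{s}-1)$.

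The step needing the most care — and the only genuine obstacle — is the admissibility of the $\arccos$: for $\arccos(\tfrac{2}{s}-1)$ to be real we need $c=\tfrac{2}{s}-1\in[-1,1]$, i.e. $s\ge 1$, which is the generic regime since $s$ is a supremum of products of squared maximum singular values. I would dispose of the degenerate case $s<1$ separately: there $r=s\cos^2(\cdot)\le s<1$ for every $t$, so exponential decay is automatic and the solution set is all of $\mathbb{R}$ (consistent with letting $\arccos(\tfrac{2}{s}-1)\to 0$). Finally I would verify that the endpoints are correctly excluded: at any $t$ making $\cos^2(t\lambda^\diamond_\phi)=1/s$ one has $r=1$, so the strict inequality forces open intervals, which is why \eqref{eq:corl} uses open rather than closed arcs.
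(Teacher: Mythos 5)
Your proposal is correct and takes essentially the same route as the paper: the paper's own proof merely observes that $(s\cos^2(t\lambda^\diamond_\phi))^l \to 0$ iff $s\cos^2(t\lambda^\diamond_\phi)<1$ and then asserts the interval characterization, which is exactly the trigonometric inequality you solve explicitly via $\cos^2\theta=\tfrac12(1+\cos 2\theta)$ and the standard solution set of $\cos\phi<c$. Your extra care — the admissibility requirement $s\ge 1$ for $\arccos(\tfrac{2}{s}-1)$, the degenerate case $s<1$, the positivity of $\lambda^\diamond_\phi$, and the openness of the intervals — fills in details the paper leaves implicit; the only overstatement is the claim that $E(\mathbf{X}_l)\to 0$ exponentially \emph{precisely when} $r<1$ (the theorem's upper bound yields only the "if" direction), but this is harmless since the corollary asserts only that direction.
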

Theorem \ref{cos_ener_prod} and Corollary \ref{corrl} demonstrate that the oversmoothing rate is influenced by three key factors: the learnable weight matrices (through $s$), the spectral characteristics of the factor and product graph (via $\lambda^\diamond_\phi$), and the graph receptive field $t$. Unlike first-order TPDEGs \cite{einizade2024continuous}, which yield only a single lower bound on $t$ and consequently suppress high-frequency components,  \eqref{eq:corl} produces infinitely valid intervals. This flexibility enables control over oversmoothing by tuning $t$ and naturally leads to oscillatory solutions acting as an inherent band-pass filter on the factor graphs.

\section{Experimental results}

We assess the performance of \method~on a real-world spatiotemporal traffic forecasting task.
Particularly, we evaluate our method on the \texttt{MetrLA} dataset \cite{li2018diffusion}, containing four months of traffic recordings collected from $207$ highways in Los Angeles County, sampled every $5$ minutes. The spatial graph is constructed with Gaussian kernels based on pairwise node distances \cite{Cini_Torch_Spatiotemporal_2022}, while the temporal dimension is modeled using simple path graphs. The forecasting task involves using the last $30$ minutes of traffic observations to predict the next $15$, $30$, and $60$ minutes, corresponding to horizons $H=3,\,6\,\text{and}\,12$, respectively.

Our model begins with a linear encoder and then applies three stacked \method~blocks, each containing a 3-layer MLP with $F^{\text{MLP}}_l=64$. The resulting node embeddings are concatenated with the original spatiotemporal input and passed through a linear decoder to generate the multi-horizon forecasts. 
Training is performed in $300$ epochs using the mean absolute error (MAE) as the objective, and the best model on the validation set is evaluated on the test set.
The data preprocessing and train–validation–test follow the protocol in \cite{sabbaqi2023graph,einizade2024continuous}. Hyperparameters are set on the validation set with the Adam optimizer.

The performance results (in terms of MAE, MAPE, and RMSE \cite{sabbaqi2023graph,einizade2024continuous}) in Table \ref{Table_SimpleBaseline}, reveal that GNN–based models generally surpass traditional (GSP) methods such as ARIMA \cite{li2018diffusion} and G-VARMA \cite{isufi2019forecasting}, as well as non-graph methods like FC-LSTM \cite{li2018diffusion}. Most notably, \method $\,$ achieves state-of-the-art performance with more significant improvements at longer forecasting horizons ($H>3$), especially compared to GTCNN \cite{sabbaqi2023graph} and CITRUS \cite{einizade2024continuous}.

\section{Conclusion}
We introduced \method, a second-order tensorial PDE framework designed to process data over multiple interacting (product) graphs,
while providing a rigorous principled analysis of stability and over-smoothing. Compared to first-order TPDEG models, \method~does not dampen the medium to high-frequency components
with great importance to periodic or fine-grained applications, like traffic forecasting \cite{li2018diffusion}.



\bibliographystyle{IEEEbib}
\bibliography{refs}

\newpage
\appendix
\onecolumn

\section*{A. Proof Theorem \ref{MDPDE_thm}} First, we state and prove the following Lemma:

\begin{lemma}
\label{lemma:Cosine_prod}
By defining \(g_t^{(0)}(\mathbf{L}_p):=\cos(t\mathbf{L}_p)\) and \(g_t^{(1)}(\mathbf{L}_p):=\sin(t\mathbf{L}_p)\) for a specific $p$, the cosine filter applied on the PG can be stated in terms of the factor graphs as follows:
\begin{equation}
\cos(t\mathbf{L}_\diamond)=\sum_{\substack{\boldsymbol\epsilon\in\{0,1\}^P\\|\boldsymbol\epsilon|\ \text{even}}}
(-1)^{|\boldsymbol\epsilon|/2}\;\downarrow\otimes_{p=1}^P g_t^{(\epsilon_p)}(\mathbf{L}_p),
\end{equation}
\end{lemma}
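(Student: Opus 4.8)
The plan is to reduce the matrix cosine of the Kronecker sum to a single Kronecker product of factor-wise trigonometric functions via Euler's formula, and then to read off the combinatorial coefficients. The starting point is the standard exponential identity for Kronecker sums: since the summands $\mathbf{I}\otimes\cdots\otimes\mathbf{L}_p\otimes\cdots\otimes\mathbf{I}$ appearing in $\mathbf{L}_\diamond=\downarrow\oplus_{p=1}^P\mathbf{L}_p$ act on distinct tensor factors they pairwise commute, and together with the functional-calculus fact $f(\mathbf{L}_p\otimes\mathbf{I})=f(\mathbf{L}_p)\otimes\mathbf{I}$ one obtains $e^{\pm it\mathbf{L}_\diamond}=\downarrow\otimes_{p=1}^P e^{\pm it\mathbf{L}_p}$. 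I would first establish (or cite) this identity, emphasizing that commutativity of the summands is precisely what makes $e^{\mathbf{A}\oplus\mathbf{B}}=e^{\mathbf{A}}\otimes e^{\mathbf{B}}$ valid.

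Next I would write the matrix cosine as $\cos(t\mathbf{L}_\diamond)=\tfrac12\bigl(e^{it\mathbf{L}_\diamond}+e^{-it\mathbf{L}_\diamond}\bigr)$ and insert the factorized exponentials. Applying Euler's formula factor-by-factor gives $e^{\pm it\mathbf{L}_p}=\cos(t\mathbf{L}_p)\pm i\sin(t\mathbf{L}_p)=g_t^{(0)}(\mathbf{L}_p)\pm i\,g_t^{(1)}(\mathbf{L}_p)$, so each Kronecker product expands by multilinearity into a sum over $\boldsymbol\epsilon\in\{0,1\}^P$, where $\epsilon_p$ records whether the cosine factor ($\epsilon_p=0$) or the sine factor ($\epsilon_p=1$) is selected in slot $p$. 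The accumulated scalar prefactor of the term indexed by $\boldsymbol\epsilon$ is $(\pm i)^{|\boldsymbol\epsilon|}$, with the sign inherited from the corresponding exponential.

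Combining the two exponentials, the coefficient of $\downarrow\otimes_{p=1}^P g_t^{(\epsilon_p)}(\mathbf{L}_p)$ becomes $\tfrac12\bigl(i^{|\boldsymbol\epsilon|}+(-i)^{|\boldsymbol\epsilon|}\bigr)=\operatorname{Re}\bigl(i^{|\boldsymbol\epsilon|}\bigr)$, which vanishes when $|\boldsymbol\epsilon|$ is odd and equals $(-1)^{|\boldsymbol\epsilon|/2}$ when $|\boldsymbol\epsilon|$ is even, reproducing exactly the claimed formula. The only genuinely delicate points are bookkeeping ones: keeping the Kronecker factors in the correct reversed ($\downarrow$) order throughout the multilinear expansion so that the product of factor-wise operators really equals $\downarrow\otimes_{p=1}^P g_t^{(\epsilon_p)}(\mathbf{L}_p)$, and confirming the simplification $\operatorname{Re}(i^{|\boldsymbol\epsilon|})$; neither is conceptually hard. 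As a sanity check I would also note the alternative route of induction on $P$ using the commuting-matrix angle-addition identity $\cos(t(\mathbf{A}+\mathbf{B}))=\cos(t\mathbf{A})\cos(t\mathbf{B})-\sin(t\mathbf{A})\sin(t\mathbf{B})$ applied with $\mathbf{A}=\oplus_{p<P}\mathbf{L}_p\otimes\mathbf{I}$ and $\mathbf{B}=\mathbf{I}\otimes\mathbf{L}_P$, which yields the same recursion on the even-parity index set.
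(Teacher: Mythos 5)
Your proposal is correct and follows essentially the same route as the paper's proof: factorize $e^{\mathrm{i}t\mathbf{L}_\diamond}$ over the Kronecker sum, expand each factor by Euler's formula, and keep only the even-parity terms, whose coefficients are $(-1)^{|\boldsymbol\epsilon|/2}$. The only cosmetic difference is that you extract the cosine via $\tfrac12\bigl(e^{\mathrm{i}t\mathbf{L}_\diamond}+e^{-\mathrm{i}t\mathbf{L}_\diamond}\bigr)$ while the paper takes the entrywise real part $\Re\bigl(e^{\mathrm{i}t\mathbf{L}_\diamond}\bigr)$ (equivalent here since the factor Laplacians are real symmetric), and you additionally spell out the commutativity justification of the exponential identity, which the paper cites implicitly.
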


\begin{proof}
Let $\mathbf{L}_\diamond=\downarrow\oplus_{p=1}^P \mathbf{L}_p$. Using the Kronecker-sum property of the matrix exponential,
\begin{equation}
e^{\mathrm{i}t\mathbf{L}_\diamond}=\downarrow\otimes_{p=1}^P e^{\mathrm{i}t\mathbf{L}_p}.
\end{equation}
Since $\cos(t\mathbf{L}_\diamond)=\Re\big(e^{\mathrm{i}t\mathbf{L}_\diamond}\big)$, where $\Re(.)$ computes the real part of a complex number, it suffices to expand the right-hand side. For each $p$, one can write:
\begin{equation}
e^{\mathrm{i}t\mathbf{L}_i}=\cos(t\mathbf{L}_i)+\mathrm{i}\sin(t\mathbf{L}_i)=g_t^{(0)}(\mathbf{L}_p)+\mathrm{i}\,g_t^{(1)}(\mathbf{L}_p),
\end{equation}
with \(g_t^{(0)}(\mathbf{L}_p):=\cos(t\mathbf{L}_p)\) and \(g_t^{(1)}(\mathbf{L}_p):=\sin(t\mathbf{L}_p)\). The Kronecker product of these sums expands as:
\begin{equation}
\downarrow\otimes_{p=1}^P\big(g_t^{(0)}(\mathbf{L}_p)+\mathrm{i}\,g_t^{(1)}(\mathbf{L}_p)\big)
= \sum_{\boldsymbol\epsilon\in\{0,1\}^P} \mathrm{i}^{|\boldsymbol\epsilon|}\;
\downarrow\otimes_{p=1}^P g_t^{(\epsilon_p)}((\mathbf{L}_p)),    
\end{equation}
because choosing \(\epsilon_p=1\) picks the \(\mathrm{i}\,g_t^{(1)}(\mathbf{L}_p)\) term in factor \(p\), contributing one factor of \(\mathrm{p}\) per chosen sine.

\noindent Taking real parts, only multi-indices with even \(|\boldsymbol\epsilon|=\sum_p\epsilon_p\) remain, and for \(|\boldsymbol\epsilon|=2k\) we have \(\Re(\mathrm{i}^{2k})=\Re((-1)^k)=(-1)^k\). Hence:
\begin{equation}
\cos(t\mathbf{L}_\diamond)=\sum_{\substack{\boldsymbol\epsilon\in\{0,1\}^P\\|\boldsymbol\epsilon|\ \text{even}}}
(-1)^{|\boldsymbol\epsilon|/2}\;\downarrow\otimes_{p=1}^P g_t^{(\epsilon_p)}(\mathbf{L}_p),
\end{equation}
which is the claimed formula.
\end{proof}
Now, using Lemma \ref{lemma:Cosine_prod} and by performing vectorization on two sides of the Eqn. \eqref{MDPDE_sol}:
\begin{equation}
\begin{split}
\mathrm{vec}(\underline{\mathbf{U}}_t)&=\sum_{\substack{\boldsymbol\epsilon\in\{0,1\}^P\\|\boldsymbol\epsilon|\ \text{even}}}
(-1)^{|\boldsymbol\epsilon|/2}\downarrow\otimes_{p=1}^P g_t^{(\epsilon_p)}(\mathbf{L}_p)\mathrm{vec}(\underline{\mathbf{U}}_0)\\
&=\cos(t\mathbf{L}_\diamond)\mathrm{vec}(\underline{\mathbf{U}}_0),
\end{split}
\end{equation}
By defining $\mathbf{u}_t:=\mathrm{vec}(\underline{\mathbf{U}}_t)$ and $\mathbf{u}_0:=\mathrm{vec}(\underline{\mathbf{U}}_0)$ and considering the Cartesian product Laplacian:
\begin{equation}
\mathbf{L}_\diamond = \sum_{p=1}^{P} \mathbf{I}_{N_1} \otimes \cdots \otimes \mathbf{I}_{N_{p-1}} \otimes \mathbf{L}_p \otimes \mathbf{I}_{N_{p+1}} \otimes \cdots \otimes \mathbf{I}_{N_P}.
\end{equation}
So, we have $\mathbf{u}_t = \cos(t \mathbf{L}_\diamond) \mathbf{u}_0$, and the following steps are needed to conclude the proof:

Using the standard derivative formula for the matrix cosine $\frac{d^2}{dt^2} \cos(t \mathbf{L}_\diamond) = - \mathbf{L}_\diamond^2 \cos(t \mathbf{L}_\diamond)$, we have $\mathbf{u}_t'' = - \mathbf{L}_\diamond^2 \mathbf{u}_t$. Define the Kronecker sum terms:
\begin{equation}
\mathbf{L}_i^{(i)} := \mathbf{I}_{N_1} \otimes \cdots \otimes \mathbf{I}_{N_{i-1}} \otimes \mathbf{L}_i \otimes \mathbf{I}_{N_{i+1}} \otimes \cdots \otimes \mathbf{I}_{N_P}.
\end{equation}
\noindent Then:
\begin{equation}
\begin{split}
&\mathbf{L}_\diamond = \sum_{i=1}^P \mathbf{L}_i^{(i)}, \\
&\mathbf{L}_\diamond^2 = \sum_{i=1}^P (\mathbf{L}_i^{(i)})^2 + 2 \sum_{1 \le i < j \le P} \mathbf{L}_i^{(i)} \mathbf{L}_j^{(j)}.
\end{split}
\end{equation}
\noindent For any tensor $\underline{\mathbf{X}}$:
\begin{equation}
\begin{split}
&(\mathbf{L}_i^{(i)})^2 \, \mathrm{vec}(\underline{\mathbf{X}}) = \mathrm{vec}(\underline{\mathbf{X}} \times_i \mathbf{L}_i^2),\\
&\mathbf{L}_i^{(i)} \mathbf{L}_j^{(j)} \, \mathrm{vec}(\underline{\mathbf{X}}) = \mathrm{vec}(\underline{\mathbf{X}} \times_i \mathbf{L}_i \times_j \mathbf{L}_j).
\end{split}
\end{equation}
Applying this to $\mathbf{u}_t'' = - \mathbf{L}_\diamond^2 \mathbf{u}_t$ and reshaping into tensor form gives:
\begin{equation}
\frac{\partial^2\underline{\mathbf{U}}_t}{\partial\:t^2} = - \sum_{i=1}^{P} \underline{\mathbf{U}}_t \times_i \mathbf{L}_i^2 - 2 \sum_{1 \le i < j \le P} \underline{\mathbf{U}}_t \times_i \mathbf{L}_i \times_j \mathbf{L}_j,
\end{equation}
and the proof is completed.

\section*{B. Proof of Proposition \ref{prop:MDPDE_theorem}} Now, by performing mode-$(P+1)$ unfolding on the tensor $f(\underline{\mathbf{U}})$ in Eq. \eqref{tensor_formm2_2} and relying on Lemma \ref{lemma:Cosine_prod}, we have:
\begin{equation}
\label{tensor_formm2_2}
\begin{split}
{f(\underline{\mathbf{U}}_l)}_{(P+1)}=\sum_{\substack{\boldsymbol\epsilon\in\{0,1\}^P\\|\boldsymbol\epsilon|\ \text{even}}}
(-1)^{|\boldsymbol\epsilon|/2}\mathbf{W}^\top_{l}{\underline{\mathbf{U}}_l}_{(P+1)}\left[\downarrow\otimes_{p=1}^P g_t^{(\epsilon_p)}(\mathbf{L}_p)\right]^\top.
\end{split}
\end{equation}
Now, by applying transposition and further simplification, we get:
\begin{equation}
\label{tensor_formm2_3}
\begin{split}
[{f(\underline{\mathbf{U}}_l)}_{(P+1)}]^\top&=\sum_{\substack{\boldsymbol\epsilon\in\{0,1\}^P\\|\boldsymbol\epsilon|\ \text{even}}}
(-1)^{|\boldsymbol\epsilon|/2}\downarrow\otimes_{p=1}^P g_{t}^{(\epsilon_p)}(\mathbf{L}_p)\;[{\underline{\mathbf{U}}_l}_{(P+1)}]^\top\mathbf{W}_l\\
&=\cos(t\mathbf{L}_\diamond)[{\underline{\mathbf{U}}_l}_{(P+1)}]^\top\mathbf{W}_l,
\end{split}
\end{equation}
which concludes the proof.

\section*{D. Proof of Theorem \ref{thm_e1e2}}
Due to the filtering approach $\varphi(u,t) = \cos(t\mathbf{L}_\diamond) \varphi(u,0)$, we only need to bound $\|\cos(t\tilde{\mathbf{L}}_\diamond) - \cos(t\mathbf{L}_\diamond)\|$, where $\tilde{\mathbf{L}}-\mathbf{L}=\mathbf{E}$. The following Lemma describes the properties of Lipschitz continuity for cosine operators.

\begin{lemma}
\label{lemma:cos_lip}
Let $\mathbf{A},\mathbf{B}$ be Hermitian matrices and let $t\ge 0$. Then:
\[
\|\cos(t\mathbf{A})-\cos(t\mathbf{B})\| \le t\,\|\mathbf{A}-\mathbf{B}\|.
\]
Moreover, the constant $t$ is optimal: one cannot replace $t$ by a smaller constant valid for all Hermitian $\mathbf{A},\mathbf{B}$.
\end{lemma}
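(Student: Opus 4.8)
The plan is to avoid the (generally false) implication that scalar Lipschitz continuity yields operator Lipschitz continuity, and instead realize $\cos(t\mathbf{M})$ as a block of a unitary matrix, where a clean Duhamel estimate applies. For a Hermitian $\mathbf{M}\in\mathbb{C}^{n\times n}$, I would introduce the skew-Hermitian dilation
\[
\mathcal{M}=\begin{pmatrix}\mathbf{0}&-\mathbf{M}\\ \mathbf{M}&\mathbf{0}\end{pmatrix},
\]
so that $e^{t\mathcal{M}}$ is unitary. A short power-series computation, using that $\mathcal{M}^2$ is block-diagonal with both diagonal blocks equal to $-\mathbf{M}^2$, shows that $\cos(t\mathbf{M})$ is exactly the top-left block of $e^{t\mathcal{M}}=\begin{pmatrix}\cos(t\mathbf{M})&-\sin(t\mathbf{M})\\ \sin(t\mathbf{M})&\cos(t\mathbf{M})\end{pmatrix}$.

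First I would apply this to $\mathbf{A}$ and $\mathbf{B}$, writing $\mathcal{A},\mathcal{B}$ for the associated dilations. Because the off-diagonal block structure preserves singular values, $\|\mathcal{A}-\mathcal{B}\|=\|\mathbf{A}-\mathbf{B}\|$. Then I would invoke the Duhamel identity
\[
e^{t\mathcal{A}}-e^{t\mathcal{B}}=\int_0^t e^{(t-s)\mathcal{A}}(\mathcal{A}-\mathcal{B})\,e^{s\mathcal{B}}\,ds,
\]
and estimate its norm. Since $\mathcal{A},\mathcal{B}$ are skew-Hermitian, the factors $e^{(t-s)\mathcal{A}}$ and $e^{s\mathcal{B}}$ are unitary and hence norm-one, so $\|e^{t\mathcal{A}}-e^{t\mathcal{B}}\|\le t\,\|\mathcal{A}-\mathcal{B}\|=t\,\|\mathbf{A}-\mathbf{B}\|$. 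Restricting to the top-left block, which (being an orthogonal compression) does not increase the operator norm, yields $\|\cos(t\mathbf{A})-\cos(t\mathbf{B})\|\le t\,\|\mathbf{A}-\mathbf{B}\|$, the claimed inequality.

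For optimality, I would reduce to $1\times1$ Hermitian matrices, i.e.\ real scalars. Fixing $t>0$, take $a=\pi/(2t)$ and $b=a+h$ with $h>0$ small; then $\cos(ta)-\cos(tb)=\sin(th)$ while $|a-b|=h$, so the ratio $\sin(th)/h\to t$ as $h\to0^+$. Hence for any constant $c<t$ the inequality with $c$ in place of $t$ fails for small enough $h$, showing that $t$ cannot be replaced by any smaller constant valid for all Hermitian $\mathbf{A},\mathbf{B}$.

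The main obstacle is conceptual rather than computational: one must resist concluding the operator bound directly from the scalar Lipschitz constant of $x\mapsto\cos(tx)$, since general $1$-Lipschitz functions (for instance $x\mapsto|x|$) are \emph{not} operator Lipschitz with the same constant. The dilation trick is precisely what makes the argument both correct and elementary, reducing everything to the unitarity of a skew-Hermitian exponential; the only points requiring care are verifying the block exponential formula and confirming that passing to a block does not increase the spectral norm.
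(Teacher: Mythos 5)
Your proof is correct, and it takes a genuinely different route to the same analytic core. The paper realizes the cosine via Euler's formula, $\cos(t\mathbf{M})=\tfrac12\bigl(e^{\mathrm{i}t\mathbf{M}}+e^{-\mathrm{i}t\mathbf{M}}\bigr)$, applies the Duhamel identity to each of the two unitary exponentials, and averages; you instead realize $\cos(t\mathbf{M})$ as the top-left block of the single unitary $e^{t\mathcal{M}}$ built from the skew-Hermitian dilation $\mathcal{M}=\begin{pmatrix}\mathbf{0}&-\mathbf{M}\\ \mathbf{M}&\mathbf{0}\end{pmatrix}$, apply Duhamel once, and compress back to the block. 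Both proofs hinge on exactly the same engine—Duhamel plus unitary invariance—so the difference is in the packaging, but the packaging has real consequences: your dilation keeps everything real (no complex scalars) and needs only one Duhamel application, at the price of doubling the dimension and of relying on two facts specific to the spectral norm, namely $\|\mathcal{A}-\mathcal{B}\|=\|\mathbf{A}-\mathbf{B}\|$ and the fact that block compression is a contraction (for the trace norm, say, the first identity picks up a factor of $2$, so your route would lose a constant there). The paper's Euler/averaging argument, by contrast, goes through verbatim for any unitarily invariant norm. Your optimality argument is the same scalar reduction as the paper's but is more explicit: the witness pair $a=\pi/(2t)$, $b=a+h$ with $\sin(th)/h\to t$ is a concrete instantiation of the paper's appeal to $\sup_x\lvert t\sin(tx)\rvert=t$. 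Finally, your cautionary remark is well taken and worth keeping: scalar Lipschitz continuity does not in general transfer to operator Lipschitz continuity (e.g., $x\mapsto|x|$), which is precisely why both proofs must route through Duhamel rather than functional calculus on eigenvalues of non-commuting matrices.
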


\begin{proof}
Using $\cos(\mathbf{X})=\tfrac12(e^{\mathrm{i}\mathbf{X}}+e^{-\mathrm{i}\mathbf{X}})$ and the Duhamel identity:
\[
e^{\mathrm{i}t\mathbf{A}}-e^{\mathrm{i}t\mathbf{B}}
= \mathrm{i}\int_0^t e^{\mathrm{i}(t-s)\mathbf{A}}(\mathbf{A}-\mathbf{B})e^{\mathrm{i}s\mathbf{B}}\,ds,
\]
we obtain, since $e^{\mathrm{i} \mathbf{A}}$ and $e^{\mathrm{i} \mathbf{B}}$ are unitary,
\[
\|e^{\mathrm{i}t\mathbf{A}}-e^{\mathrm{i}t\mathbf{B}}\| \le \int_0^t \|\mathbf{A}-\mathbf{B}\|\,ds = t\|\mathbf{A}-\mathbf{B}\|.
\]
The same bound holds for $e^{-\mathrm{i}t\mathbf{A}}-e^{-\mathrm{i}t\mathbf{B}}$. Averaging yields:
\[
\begin{split}
&\|\cos(t\mathbf{A})-\cos(t\mathbf{B})\|\\
&\le \tfrac12\|e^{\mathrm{i}t\mathbf{A}}-e^{\mathrm{i}t\mathbf{B}}\| + \tfrac12\|e^{-\mathrm{i}t\mathbf{A}}-e^{-\mathrm{i}t\mathbf{B}}\|\le t\|\mathbf{A}-\mathbf{B}\|.
\end{split}
\]
For optimality, note that restricting to commuting (diagonal) matrices reduces the inequality to the scalar Lipschitz bound $|\cos(t\lambda)-\cos(t\mu)|\le C|\lambda-\mu|$, which forces $C\ge\sup_x|\frac{d\,\cos(tx)}{d\,x}|=\sup_x|(-t\sin(tx))|=t$.
\end{proof}
Then, we need another Proposition:
\begin{proposition}
\label{prop_prodE}
(from \cite{einizade2024continuous}) Let $\mathbf{L}_\diamond$ and $\tilde{\mathbf{L}}_\diamond$ denote the original and perturbed Laplacian matrices of the Cartesian PG, where the perturbation for each factor graph follows the model described in \eqref{factor_perturb_bounds}. Under this formulation, the perturbation matrix $\mathbf{E}$ admits the Cartesian structure, such that $\mathbf{E} = \oplus_{p=1}^{P} \mathbf{E}_p$, and the following relationship holds:
\begin{equation}
\label{factor_perturb_bounds2}
\tilde{\mathbf{L}}_\diamond=\mathbf{L}_\diamond+\mathbf{E};\:\:\:\vertiii{\mathbf{E}}\le\sum_{p=1}^{P}{\varepsilon_p},
\end{equation}
\end{proposition}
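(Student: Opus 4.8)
The plan is to prove the two assertions of the proposition separately: first the additive decomposition $\tilde{\mathbf{L}}_\diamond = \mathbf{L}_\diamond + \mathbf{E}$ together with the Cartesian (Kronecker-sum) structure $\mathbf{E} = \oplus_{p=1}^P \mathbf{E}_p$, and then the norm bound $\vertiii{\mathbf{E}} \le \sum_{p=1}^P \varepsilon_p$.

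First I would expand the perturbed product Laplacian through the definition of the Kronecker sum, writing $\tilde{\mathbf{L}}_\diamond = \oplus_{p=1}^P \tilde{\mathbf{L}}_p = \sum_{p=1}^P \mathbf{I}_{N_1}\otimes\cdots\otimes\tilde{\mathbf{L}}_p\otimes\cdots\otimes\mathbf{I}_{N_P}$, and substitute the factor-wise model $\tilde{\mathbf{L}}_p = \mathbf{L}_p + \mathbf{E}_p$ from \eqref{factor_perturb_bounds}. Because each summand is a Kronecker product that is linear in its $p$-th slot, all remaining slots being fixed identities, the substitution distributes: each term splits into a clean part carrying $\mathbf{L}_p$ and a perturbation part carrying $\mathbf{E}_p$. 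Collecting the clean parts over $p$ reconstructs $\mathbf{L}_\diamond = \oplus_{p=1}^P \mathbf{L}_p$, while collecting the perturbation parts yields $\mathbf{E} = \sum_{p=1}^P \mathbf{I}_{N_1}\otimes\cdots\otimes\mathbf{E}_p\otimes\cdots\otimes\mathbf{I}_{N_P} = \oplus_{p=1}^P \mathbf{E}_p$. This simultaneously establishes $\tilde{\mathbf{L}}_\diamond = \mathbf{L}_\diamond + \mathbf{E}$ and the Cartesian structure of $\mathbf{E}$; the only algebraic fact needed is additivity of the Kronecker sum under matching-slot addition, which is immediate from bilinearity of $\otimes$.

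For the norm bound I would apply the triangle inequality to the $P$-term expansion $\mathbf{E} = \sum_{p=1}^P \mathbf{I}_{N_1}\otimes\cdots\otimes\mathbf{E}_p\otimes\cdots\otimes\mathbf{I}_{N_P}$, obtaining $\vertiii{\mathbf{E}} \le \sum_{p=1}^P \vertiii{\mathbf{I}_{N_1}\otimes\cdots\otimes\mathbf{E}_p\otimes\cdots\otimes\mathbf{I}_{N_P}}$. Each embedded factor is then evaluated using multiplicativity of the norm over tensor products, $\vertiii{\mathbf{A}\otimes\mathbf{B}} = \vertiii{\mathbf{A}}\,\vertiii{\mathbf{B}}$, together with $\vertiii{\mathbf{I}_n}=1$, so that the embedded term collapses to $\vertiii{\mathbf{E}_p} \le \varepsilon_p$. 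Summing over $p$ delivers $\vertiii{\mathbf{E}} \le \sum_{p=1}^P \varepsilon_p$.

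The main obstacle is that this final step is norm-dependent: the clean bound $\sum_{p=1}^P \varepsilon_p$ requires $\vertiii{\mathbf{A}\otimes\mathbf{B}} = \vertiii{\mathbf{A}}\,\vertiii{\mathbf{B}}$ and $\vertiii{\mathbf{I}_n}=1$ to hold simultaneously. Both hold for the spectral (operator $2$-) norm — and more generally for induced operator $\ell^p$-norms — but fail for the Frobenius norm, where $\vertiii{\mathbf{I}_n}=\sqrt{n}$ would inflate each embedded term by a dimension factor and break the bound. I would therefore fix $\vertiii{\cdot}$ to be the spectral norm, the natural choice for this Hermitian Laplacian perturbation model and the one consistent with the unitarily-invariant estimate in Lemma \ref{lemma:cos_lip}, under which the stated inequality holds exactly.
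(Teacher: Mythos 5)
Your proof is correct, and it is worth noting that this paper itself gives no proof of the proposition at all: it is imported by citation from CITRUS \cite{einizade2024continuous}, so your argument is the missing derivation rather than an alternative to one. The route you take — expand $\oplus_{p=1}^P\tilde{\mathbf{L}}_p$ slot-by-slot, use bilinearity of $\otimes$ to split each summand into an $\mathbf{L}_p$ part and an $\mathbf{E}_p$ part, then apply the triangle inequality and $\vertiii{\mathbf{I}_{N_1}\otimes\cdots\otimes\mathbf{E}_p\otimes\cdots\otimes\mathbf{I}_{N_P}}\le\vertiii{\mathbf{E}_p}$ — is the standard (essentially the only natural) argument, and every step checks out.

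Your closing caveat is a genuine refinement over the paper's treatment, not pedantry. The paper leaves $\vertiii{\cdot}$ as an unspecified matrix norm in \eqref{factor_perturb_bounds}, but the bound $\vertiii{\mathbf{E}}\le\sum_p\varepsilon_p$ really does require $\vertiii{\mathbf{I}\otimes\mathbf{E}_p\otimes\mathbf{I}}\le\vertiii{\mathbf{E}_p}$, which fails for the Frobenius norm (there $\vertiii{\mathbf{I}_n}=\sqrt{n}$ inflates each term by $\sqrt{\prod_{q\neq p}N_q}$). Fixing the spectral norm is also the choice that makes the downstream use coherent: the Duhamel estimate in Lemma \ref{lemma:cos_lip} needs $\|e^{\mathrm{i}(t-s)\mathbf{A}}\,\mathbf{X}\,e^{\mathrm{i}s\mathbf{B}}\|\le\|\mathbf{X}\|$, which the spectral norm supplies via unitarity. (One could weaken ``multiplicativity'' to the inequality $\vertiii{\mathbf{A}\otimes\mathbf{B}}\le\vertiii{\mathbf{A}}\,\vertiii{\mathbf{B}}$ together with $\vertiii{\mathbf{I}_n}=1$, since only the upper bound is used; for the spectral norm one in fact has equality $\vertiii{\mathbf{I}\otimes\mathbf{E}_p\otimes\mathbf{I}}=\vertiii{\mathbf{E}_p}$, as the singular values are just those of $\mathbf{E}_p$ with multiplicity.) A last cosmetic point: the proposition writes $\mathbf{E}=\oplus_{p=1}^P\mathbf{E}_p$ while Proposition \ref{prop:MDPDE_theorem} works with the reversed sum $\downarrow\oplus$; your proof matches the ordering as stated, and the bound is of course insensitive to this permutation similarity.
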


Therefore, using Lemma \ref{lemma:cos_lip} and Proposition \ref{prop_prodE}, one can write:
\begin{equation}
\begin{split}
&\vertiii{\varphi(u,t)-\tilde{\varphi}(u,t)}\le t\,.\vertiii{\varphi(u,0)}.\,\vertiii{\mathbf{E}}\\
&\le t\,\vertiii{\varphi(u,0)}\,\left(\sum_{p=1}^{P}{\varepsilon_p}\right)=\sum_{p=1}^{P}{\mathcal{O}(\varepsilon_p)}.
\end{split}
\end{equation}

\section*{E. Proof of Theorem \ref{cos_ener_prod}}
First, Note that Using \(\tilde{\mathbf{x}}\) as the Graph Fourier Transform (GFT) \cite{ortega2018graph} of \(\mathbf{x}\) w.r.t. \(\hat{\mathbf{L}}\) (with eigenvalues $\{\lambda_i\}_{i=1}^N$), one can write \cite{cai2020note}:
\begin{equation}
E(\mathbf{x})=\mathbf{x}^\top \hat{\mathbf{L}}\mathbf{x}=\sum_{i=1}^{N}{\lambda_i\tilde{x}^2_i}.
\end{equation}
Next, by defining $\lambda$ as the smallest non-zero eigenvalue of the Laplacian $\hat{\mathbf{L}}$, the following lemma characterizes the over-smoothing aspects of applying a heat kernel in the simplest case.
\begin{lemma}
\label{exp_ener}
We have:
\begin{equation}
E(\cos{(t\hat{\mathbf{L}})}\mathbf{x})\le \cos^2{(t\lambda_\phi)}E(\mathbf{x}),
\end{equation}
where $\cos^2{(t\lambda_\phi)}=\max_{i}{\cos^2{(t\lambda_i)}}$.
\end{lemma}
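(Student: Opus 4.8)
The plan is to diagonalize the cosine operator in the graph Fourier basis and then exploit the nonnegativity of the Laplacian eigenvalues to pull out a single scalar factor. Since $\hat{\mathbf{L}}$ is symmetric, it admits an orthonormal eigendecomposition $\hat{\mathbf{L}} = \mathbf{V}\boldsymbol{\Lambda}\mathbf{V}^\top$ with $\boldsymbol{\Lambda} = \diag(\lambda_1,\dots,\lambda_N)$ and $\lambda_i \ge 0$, so that the matrix function $\cos(t\hat{\mathbf{L}}) = \mathbf{V}\cos(t\boldsymbol{\Lambda})\mathbf{V}^\top$ acts diagonally on the GFT coefficients. First I would observe that the GFT of $\cos(t\hat{\mathbf{L}})\mathbf{x}$ has entries $\cos(t\lambda_i)\tilde{x}_i$, because $\mathbf{V}^\top\cos(t\hat{\mathbf{L}})\mathbf{x} = \cos(t\boldsymbol{\Lambda})\tilde{\mathbf{x}}$, where $\tilde{\mathbf{x}} = \mathbf{V}^\top\mathbf{x}$ is the GFT of $\mathbf{x}$.

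Next, I would apply the spectral formula for the Dirichlet energy already recalled just before the lemma, namely $E(\mathbf{y}) = \mathbf{y}^\top\hat{\mathbf{L}}\mathbf{y} = \sum_i \lambda_i \tilde{y}_i^2$, to the vector $\mathbf{y} = \cos(t\hat{\mathbf{L}})\mathbf{x}$. Using the GFT coefficients computed above, this yields $E(\cos(t\hat{\mathbf{L}})\mathbf{x}) = \sum_{i=1}^N \lambda_i \cos^2(t\lambda_i)\tilde{x}_i^2$. The concluding step factors out the scalar bound: since $\lambda_i \ge 0$, each summand $\lambda_i \tilde{x}_i^2$ is nonnegative, so replacing $\cos^2(t\lambda_i)$ by $\max_j \cos^2(t\lambda_j) = \cos^2(t\lambda_\phi)$ term-by-term and summing gives $E(\cos(t\hat{\mathbf{L}})\mathbf{x}) \le \cos^2(t\lambda_\phi)\sum_i \lambda_i \tilde{x}_i^2 = \cos^2(t\lambda_\phi)\,E(\mathbf{x})$, which is exactly the claim.

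The single point requiring care — and the closest thing to an obstacle in an otherwise direct argument — is the treatment of the kernel of $\hat{\mathbf{L}}$: the zero eigenvalues satisfy $\cos^2(0)=1$, so an unrestricted maximum would make the factor vacuously equal to $1$. However, the corresponding terms $\lambda_i \tilde{x}_i^2$ vanish identically in both $E(\cos(t\hat{\mathbf{L}})\mathbf{x})$ and $E(\mathbf{x})$, so they may be discarded, and the maximum defining $\cos^2(t\lambda_\phi)$ is effectively restricted to the nonzero spectrum (consistent with $\lambda_\phi$ being assembled from nonzero factor eigenvalues in Theorem~\ref{cos_ener_prod}). I would state this restriction explicitly so that the resulting bound is nontrivial; beyond this observation, no further estimates are needed, as the lemma follows immediately from diagonalization together with the nonnegativity of the Laplacian spectrum.
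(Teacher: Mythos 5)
Your proof is correct and follows essentially the same route as the paper's: diagonalize $\hat{\mathbf{L}}$, express both energies in terms of GFT coefficients so that $E(\cos(t\hat{\mathbf{L}})\mathbf{x})=\sum_i \lambda_i\cos^2(t\lambda_i)\tilde{x}_i^2$, and pull out the maximum of $\cos^2(t\lambda_i)$ using nonnegativity of the spectrum. Your explicit remark that the maximum must be restricted to the nonzero eigenvalues (since the $\lambda_i=0$ terms vanish from both sides) is exactly the point the paper notes at the end of its own proof, stated with slightly more care.
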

\begin{proof} 
By considering the EVD forms of $\hat{\mathbf{L}}=\mathbf{V} \boldsymbol{\Lambda}\mathbf{V}^\top$ and $\cos{(t\hat{\mathbf{L}})}=\mathbf{V} \cos{(t\boldsymbol{\Lambda})}\mathbf{V}^\top$, one can write:
\begin{equation}
\begin{split}
&E(\cos{(t\hat{\mathbf{L}})}\mathbf{x})\\
&=\mathbf{x}^\top \overbrace{{\cos{(t\hat{\mathbf{L}})}}^\top}^{\mathbf{V} \cos{(t\boldsymbol{\Lambda})}\mathbf{V}^\top} \overbrace{\hat{\mathbf{L}}}^{\mathbf{V} \boldsymbol{\Lambda}\mathbf{V}^\top} \overbrace{\cos{(t\hat{\mathbf{L}})}}^{\mathbf{V} \cos{(t\hat{\boldsymbol{\Lambda}})}\mathbf{V}^\top}\mathbf{x}=\sum_{i=1}^{N}{\lambda_i\tilde{x}^2_i \cos^2{(t\lambda_i)}}\\
&\le \cos^2{(t\lambda_\phi)}\left(\sum_{i=1}^{N}{\lambda_i\tilde{x}^2_i}\right)=\cos^2{(t\lambda_\phi)}E(\mathbf{x}).
\end{split}
\end{equation}
Note that in the above proof, we ruled out the zero eigenvalues since they are useless in analyzing the Dirichlet energy.   
\end{proof}
Then by considering the following lemmas from \cite{cai2020note}:
\begin{lemma}
(Lemma 3.2 in \cite{cai2020note}). \(E(\mathbf{X}\mathbf{W})\le\|\mathbf{W}^\top\|_2^2 E(\mathbf{X})\).    
\end{lemma}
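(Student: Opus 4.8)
The plan is to reduce the claim to the closed form of the Dirichlet energy already recorded in the excerpt, namely $E(\mathbf{X})=\tr(\mathbf{X}^\top\hat{\mathbf{L}}\mathbf{X})$, and to exploit that the normalized Laplacian $\hat{\mathbf{L}}$ is symmetric and positive semidefinite. Since $\hat{\mathbf{L}}\succeq\mathbf{0}$, it admits a factorization $\hat{\mathbf{L}}=\mathbf{B}^\top\mathbf{B}$ (for instance $\mathbf{B}=\hat{\mathbf{L}}^{1/2}$, the PSD square root, or any signed-incidence factor). This rewrites the energy as a squared Frobenius norm, $E(\mathbf{X})=\tr(\mathbf{X}^\top\mathbf{B}^\top\mathbf{B}\mathbf{X})=\|\mathbf{B}\mathbf{X}\|_F^2$, converting the trace inequality into a norm inequality that is easier to manipulate.

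First I would write $E(\mathbf{X}\mathbf{W})=\|\mathbf{B}\mathbf{X}\mathbf{W}\|_F^2$ using this factorization. Then I would apply the mixed Frobenius--spectral submultiplicative bound $\|\mathbf{M}\mathbf{W}\|_F\le\|\mathbf{W}^\top\|_2\,\|\mathbf{M}\|_F$ with $\mathbf{M}=\mathbf{B}\mathbf{X}$, giving $\|\mathbf{B}\mathbf{X}\mathbf{W}\|_F^2\le\|\mathbf{W}^\top\|_2^2\,\|\mathbf{B}\mathbf{X}\|_F^2$. Substituting $\|\mathbf{B}\mathbf{X}\|_F^2=E(\mathbf{X})$ back in yields $E(\mathbf{X}\mathbf{W})\le\|\mathbf{W}^\top\|_2^2\,E(\mathbf{X})$, which is exactly the claim. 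To keep the argument self-contained I would also prove the submultiplicative step directly: writing $\mathbf{m}_i^\top$ for the $i$-th row of $\mathbf{M}$, the $i$-th row of $\mathbf{M}\mathbf{W}$ is $\mathbf{m}_i^\top\mathbf{W}$, so $\|\mathbf{M}\mathbf{W}\|_F^2=\sum_i\|\mathbf{W}^\top\mathbf{m}_i\|_2^2\le\sum_i\|\mathbf{W}^\top\|_2^2\,\|\mathbf{m}_i\|_2^2=\|\mathbf{W}^\top\|_2^2\,\|\mathbf{M}\|_F^2$, where the middle inequality is the defining property of the spectral norm of $\mathbf{W}^\top$.

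An equivalent route avoiding the factorization is a positive-semidefinite ordering argument, which I would mention as a cross-check: setting $\mathbf{M}:=\mathbf{X}^\top\hat{\mathbf{L}}\mathbf{X}\succeq\mathbf{0}$, the cyclic property of the trace gives $E(\mathbf{X}\mathbf{W})=\tr(\mathbf{W}^\top\mathbf{M}\mathbf{W})=\tr(\mathbf{M}\,\mathbf{W}\mathbf{W}^\top)$; since $\mathbf{W}\mathbf{W}^\top\preceq\|\mathbf{W}^\top\|_2^2\,\mathbf{I}$ and $\tr(\mathbf{M}\mathbf{N})\ge0$ whenever $\mathbf{M},\mathbf{N}\succeq\mathbf{0}$ (applied to $\mathbf{N}=\|\mathbf{W}^\top\|_2^2\mathbf{I}-\mathbf{W}\mathbf{W}^\top$), one gets $\tr(\mathbf{M}\,\mathbf{W}\mathbf{W}^\top)\le\|\mathbf{W}^\top\|_2^2\,\tr(\mathbf{M})$. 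There is no genuine obstacle in this lemma; the only points requiring care are verifying that $\hat{\mathbf{L}}$ is PSD (so the factorization and the ordering steps are legitimate) and keeping the constant in the form $\|\mathbf{W}^\top\|_2^2$ as stated, using $\|\mathbf{W}^\top\|_2=\|\mathbf{W}\|_2$ to reconcile the two derivations.
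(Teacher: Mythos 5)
Your proof is correct on both routes, but note that the paper itself contains no proof of this statement: it is imported verbatim as Lemma~3.2 of \cite{cai2020note} and used as a black box, so the relevant comparison is with the argument in that reference. The cited proof is edge-wise and definition-level: starting from the sum form of the Dirichlet energy in \eqref{e_dir}, the degree-normalized difference vector for each edge of $\mathbf{X}\mathbf{W}$ is $\bigl(\mathbf{x}_i/\sqrt{\deg_i}-\mathbf{x}_j/\sqrt{\deg_j}\bigr)^\top\mathbf{W}$, each squared norm is bounded by $\|\mathbf{W}^\top\|_2^2\,\bigl\|\mathbf{x}_i/\sqrt{\deg_i}-\mathbf{x}_j/\sqrt{\deg_j}\bigr\|_2^2$, and summing over edges gives the claim. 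Your factorization route ($\hat{\mathbf{L}}=\mathbf{B}^\top\mathbf{B}$, energy as $\|\mathbf{B}\mathbf{X}\|_F^2$, then the mixed Frobenius--spectral bound, which you correctly prove row-wise) and your Loewner-ordering cross-check ($\tr(\mathbf{M}\,\mathbf{W}\mathbf{W}^\top)\le\|\mathbf{W}^\top\|_2^2\tr(\mathbf{M})$ via $\mathbf{W}\mathbf{W}^\top\preceq\|\mathbf{W}^\top\|_2^2\,\mathbf{I}$) are more algebraic and strictly more general: they apply to any energy of the form $\tr(\mathbf{X}^\top\mathbf{Q}\mathbf{X})$ with $\mathbf{Q}\succeq\mathbf{0}$, so in particular they cover the normalized product Laplacian $\hat{\mathbf{L}}_\diamond$ appearing in Theorem~\ref{cos_ener_prod} without unpacking the edge structure of the Cartesian product graph. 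What the edge-wise proof buys in exchange is elementarity --- no square root, no trace inequalities, only the spectral-norm bound on vectors. Your two prerequisites, positive semidefiniteness of $\hat{\mathbf{L}}$ and $\|\mathbf{W}^\top\|_2=\|\mathbf{W}\|_2$, are immediate (the first directly from the nonnegative edge-sum form in \eqref{e_dir}), so there is no gap in your argument.
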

\begin{lemma}
(Lemma 3.3 in \cite{cai2020note}). For ReLU and Leaky-ReLU nonlinearities \(E(\sigma(\mathbf{X}))\le E(\mathbf{X})\).    
\end{lemma}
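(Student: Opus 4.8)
The plan is to reduce the matrix inequality to a family of scalar, edge-wise inequalities and then combine two elementary properties of the activation. First I would recall that the Dirichlet energy decomposes both over edges and over feature columns: writing $\mathbf{x}_i$ for the feature row of node $i$ and $\mathbf{A}_{ij}\ge 0$ for the nonnegative edge weights, we have
\[
E(\mathbf{X})=\frac{1}{2}\sum_{(i,j)\in\mathcal{E}} \mathbf{A}_{ij}\left\|\frac{\mathbf{x}_i}{\sqrt{\text{deg}_i}}-\frac{\mathbf{x}_j}{\sqrt{\text{deg}_j}}\right\|^2=\tr(\mathbf{X}^\top\hat{\mathbf{L}}\mathbf{X})=\sum_{f}E(\mathbf{x}^{(f)}),
\]
where $\mathbf{x}^{(f)}$ denotes the $f$-th column of $\mathbf{X}$. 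Since $\sigma(\cdot)$ acts entrywise, it acts independently on each column and each coordinate, so it suffices to establish, for every edge $(i,j)\in\mathcal{E}$ (with $\text{deg}_i,\text{deg}_j>0$) and every scalar pair $(x_i,x_j)$, the bound
\[
\left(\frac{\sigma(x_i)}{\sqrt{\text{deg}_i}}-\frac{\sigma(x_j)}{\sqrt{\text{deg}_j}}\right)^2\le\left(\frac{x_i}{\sqrt{\text{deg}_i}}-\frac{x_j}{\sqrt{\text{deg}_j}}\right)^2.
\]
Multiplying by $\mathbf{A}_{ij}\ge 0$, summing over edges, and summing over feature columns then yields $E(\sigma(\mathbf{X}))\le E(\mathbf{X})$.

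The two properties I would invoke are: (i) \emph{positive homogeneity}, $\sigma(cx)=c\,\sigma(x)$ for every $c\ge 0$, which holds for both ReLU and Leaky-ReLU since scaling by a nonnegative $c$ preserves the sign of $x$; and (ii) \emph{$1$-Lipschitz continuity}, $|\sigma(a)-\sigma(b)|\le|a-b|$, which follows because the (almost-everywhere) derivative of ReLU lies in $\{0,1\}$ and that of Leaky-ReLU in $\{\alpha,1\}\subset(0,1]$. Applying (i) with $c=\text{deg}_i^{-1/2}\ge 0$ lets me pull the normalization inside the activation, $\sigma(x_i)/\sqrt{\text{deg}_i}=\sigma\!\left(x_i/\sqrt{\text{deg}_i}\right)$, so that with $a:=x_i/\sqrt{\text{deg}_i}$ and $b:=x_j/\sqrt{\text{deg}_j}$ the edge-wise target reduces to exactly $(\sigma(a)-\sigma(b))^2\le(a-b)^2$, which is precisely the contraction in (ii). This closes the argument.

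The main obstacle — and the reason the statement is not an immediate one-line consequence of Lipschitz-ness — is the mismatched degree normalization at the two endpoints of each edge: naively applying $1$-Lipschitz continuity to $\sigma(x_i)$ and $\sigma(x_j)$ controls $|\sigma(x_i)-\sigma(x_j)|$ but not the \emph{normalized} difference, because the factors $\text{deg}_i^{-1/2}$ and $\text{deg}_j^{-1/2}$ generally differ. The homogeneity step in (i) is exactly what dissolves this difficulty, absorbing each node's normalization into the argument of $\sigma$ before the contraction estimate is applied. I would therefore stress that both properties are genuinely needed, and that the bound can fail for activations that are Lipschitz but not positively homogeneous (e.g.\ those with a nonzero bias or offset).
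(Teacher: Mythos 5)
Your proof is correct, but note that the paper contains no proof of this statement to compare against: the lemma is imported verbatim, with citation, as Lemma~3.3 of \cite{cai2020note}, and is used as a black box in the proof of Theorem~3.3. Your argument supplies the standard derivation behind that citation: decompose $E(\mathbf{X})=\tr(\mathbf{X}^\top\hat{\mathbf{L}}\mathbf{X})$ over edges and feature columns, use positive homogeneity $\sigma(cx)=c\,\sigma(x)$ for $c\ge 0$ to absorb the factors $\mathrm{deg}_i^{-1/2}$ into the argument of the activation (this step requires no isolated nodes, i.e.\ $\mathrm{deg}_i>0$, consistent with the paper's standing assumption), and then conclude by the $1$-Lipschitz contraction $(\sigma(a)-\sigma(b))^2\le(a-b)^2$, valid for ReLU and for Leaky-ReLU with slope $\alpha\in(0,1]$. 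Your closing remark is the genuinely valuable part: the mismatched normalizations $\mathrm{deg}_i^{-1/2}\neq\mathrm{deg}_j^{-1/2}$ mean Lipschitz continuity alone does not suffice (e.g.\ $\sigma(x)=x+c$ with $c\neq 0$ is $1$-Lipschitz yet can strictly increase the energy, as seen by taking $\mathbf{X}=\mathbf{0}$ on a graph with unequal degrees), so homogeneity is essential — a point the paper, by citing rather than proving, leaves implicit.
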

\begin{lemma}[Closed form for a cosine of a sum]\label{thm:main}
For any integer $P\ge 1$ and any real numbers $\lambda_1,\dots,\lambda_P\in\mathbb{R}$,
\begin{equation}
\label{eq:main}
\cos\!\Biggl(\sum_{p=1}^{P} \lambda_p\Biggr)
=
\sum_{\substack{S \subseteq \{1,\dots,P\} \\ |S| \ \mathrm{even}}}
(-1)^{|S|/2}\;
\prod_{p \in S} \sin(\lambda_p)\;
\prod_{p \notin S} \cos(\lambda_p).
\end{equation}
\end{lemma}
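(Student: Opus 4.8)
The plan is to prove this identity via Euler's formula, mirroring exactly the argument already carried out for the matrix-valued Lemma \ref{lemma:Cosine_prod} (this is precisely its scalar specialization, with each $\lambda_p$ playing the role of $\mathbf{L}_p$). First I would write $\cos\!\left(\sum_{p=1}^P \lambda_p\right) = \Re\!\left(e^{\mathrm{i}\sum_{p=1}^P \lambda_p}\right)$ and use the fact that the exponential of a sum factors as a product, so that $e^{\mathrm{i}\sum_{p=1}^P \lambda_p} = \prod_{p=1}^P e^{\mathrm{i}\lambda_p} = \prod_{p=1}^P (\cos\lambda_p + \mathrm{i}\sin\lambda_p)$.

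Next I would expand this product over all subsets $S \subseteq \{1,\dots,P\}$: choosing the imaginary summand $\mathrm{i}\sin\lambda_p$ for each index $p \in S$ and the real summand $\cos\lambda_p$ for each $p \notin S$ contributes one factor of $\mathrm{i}$ per chosen sine, yielding
\[
\prod_{p=1}^P (\cos\lambda_p + \mathrm{i}\sin\lambda_p) = \sum_{S \subseteq \{1,\dots,P\}} \mathrm{i}^{|S|} \prod_{p \in S} \sin\lambda_p \prod_{p \notin S} \cos\lambda_p.
\]
Taking real parts, the factor $\mathrm{i}^{|S|}$ is real exactly when $|S|$ is even, and for $|S| = 2k$ we have $\Re(\mathrm{i}^{2k}) = (-1)^k = (-1)^{|S|/2}$; all odd-cardinality terms are purely imaginary and drop out. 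This leaves precisely the claimed sum over even-cardinality subsets, completing the argument.

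The main obstacle is negligible here: the only point requiring care is the bookkeeping of the powers of $\mathrm{i}$ and the observation $\Re(\mathrm{i}^{2k}) = (-1)^k$, which determines both the parity restriction on $|S|$ and the sign $(-1)^{|S|/2}$. An alternative would be a direct induction on $P$ using the angle-addition formula $\cos(a+b) = \cos a\cos b - \sin a\sin b$ (together with $\sin(a+b)=\sin a\cos b+\cos a\sin b$) for the inductive step, but the Euler-formula expansion is cleaner and reuses verbatim the combinatorial structure established in Lemma \ref{lemma:Cosine_prod}.
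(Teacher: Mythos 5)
Your proof is correct, but it takes a different route from the paper's own proof of this lemma. The paper proves the scalar identity by induction on $P$: it applies the two-angle formula $\cos(L_P+\lambda_{P+1})=\cos(L_P)\cos(\lambda_{P+1})-\sin(L_P)\sin(\lambda_{P+1})$ together with a companion expansion of $\sin(L_P)$ over odd-cardinality subsets, and checks that the signs $(-1)^{|T|/2}$ come out right when distributing. You instead expand $\prod_{p}(\cos\lambda_p+\mathrm{i}\sin\lambda_p)$ via Euler's formula and take real parts — which is exactly the argument the paper uses for the matrix-valued Lemma~\ref{lemma:Cosine_prod}, specialized to $1\times 1$ matrices (indeed, you could shortcut the whole thing by just invoking Lemma~\ref{lemma:Cosine_prod} with $\mathbf{L}_p=[\lambda_p]$, since the Kronecker product of scalars is ordinary multiplication). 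Your route is arguably tighter than the paper's: the paper's induction quietly relies on the companion sine identity, which it states but never proves — a fully rigorous induction would have to carry both the cosine and sine expansions simultaneously — whereas your expansion needs no auxiliary identity and the sign bookkeeping $\Re(\mathrm{i}^{2k})=(-1)^k$ is immediate. What the paper's induction buys in exchange is a purely real-variable argument that avoids complex exponentials and makes the mechanism behind the alternating signs (the subtraction in the angle-addition formula) explicit; it also yields the sine expansion as a by-product, which can be useful elsewhere.
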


\begin{proof}
We set our proof by induction using the two-angle formula. 
We prove \eqref{eq:main} for all $P\in\mathbb{N}$.
For $P=1$, the only even subset is $\varnothing$, giving $\cos(\lambda_1)$.
Assume \eqref{eq:main} holds for some $P$ and let $L_P=\sum_{p=1}^{P}\lambda_p$.
Using $\cos(x+y)=\cos (x)\cos (y)-\sin (x)\sin (y)$,
\begin{equation}\label{eq:induction}
\cos(L_P+\lambda_{P+1})=\cos (L_P)\cos(\lambda_{P+1})-\sin (L_P)\sin(\lambda_{P+1}).
\end{equation}
The companion identity is
\begin{equation}
\sin(L_P)
=
\sum_{\substack{S\subseteq\{1,\dots,P\}\\ |S|\ \mathrm{odd}}}
(-1)^{(|S|-1)/2}
\Bigl(\prod_{p\in S}\sin(\lambda_p)\Bigr)\!
\Bigl(\prod_{p\notin S}\cos(\lambda_p)\Bigr).
\end{equation}
Substituting the two expansions into \eqref{eq:induction} and distributing shows that
all even-cardinality subsets $T\subseteq\{1,\dots,P+1\}$ occur with coefficient $(-1)^{|T|/2}$, 
yielding \eqref{eq:main} with $P+1$ in place of $P$. Thus, the result holds for all $P$ by induction.
\end{proof}
Therefore, by combining the previous concepts, for $l$ layers, it can be said that:
\begin{equation}
\begin{split}
&E(\mathbf{X}_{l})\le (s\cos^2{(t\lambda^\diamond_\phi)})^l E(\mathbf{X}),\\
&\text{where}\\
&\cos{(t\,\lambda^\diamond_\phi)}=\sum_{\substack{S \subseteq \{1,\dots,P\} \\ |S| \ \mathrm{even}}}
(-1)^{|S|/2}\;
\prod_{p \in S} \sin(t\,\lambda^{(p)}_{i_p})\;
\prod_{p \notin S} \cos(t\,\lambda^{(p)}_{i_p}).
\end{split}
\end{equation}
Here, $\lambda^{(p)}_{i_p}$ is the $i_p$-th eigenvalue of the $p$-th normalized factor graph such that $\cos^2{(t\lambda^\diamond_\phi)}=\max_{i}{\cos^2{(t\lambda^{\diamond}_i)}}$, where $\lambda^{\diamond}_i$ is the $i$-th non-zero eigenvalue of $\hat{\mathbf{L}}_\diamond$ and $\lambda^{\diamond}_{\phi}=\sum_{p=1}^{P}{\lambda^{(p)}_{i_p}}$.

So, \(E(\mathbf{X}_{l})\) exponentially converges to \(0\), when:
\begin{equation}
\begin{split}
&\lim_{l\rightarrow\infty}{(s\cos^2{(t\lambda_\phi)})^l}=0 \leftrightarrow s\cos^2{(t\lambda_\phi)}<1,\rightarrow\\
&t \in \bigcup_{k \in \mathbb{Z}} \frac{1}{2\lambda^\diamond_\phi}( 2k\pi + \arccos(\frac{2}{s}-1),\; 2(k+1)\pi - \arccos(\frac{2}{s}-1) ).
\end{split}
\end{equation}

\end{document}